
\documentclass{article}

\usepackage{microtype}
\usepackage{graphicx}
\usepackage{subfigure}
\usepackage{booktabs} 

\usepackage{hyperref}



\usepackage[accepted]{icml2023}

\usepackage{amsmath}
\usepackage{amssymb}
\usepackage{mathtools}
\usepackage{amsthm}

\usepackage[capitalize,noabbrev]{cleveref}

\theoremstyle{plain}
\newtheorem{theorem}{Theorem}[section]
\newtheorem{proposition}[theorem]{Proposition}

\theoremstyle{definition}

\theoremstyle{remark}

\usepackage[textsize=tiny]{todonotes}

\usepackage{multirow}
\usepackage{wrapfig}
\usepackage{lipsum}
\usepackage{setspace}
\usepackage{tabularx}
\usepackage{float}
\usepackage{pifont}

\usepackage{amsmath,amsfonts,bm}









\def\eqref#1{equation~\ref{#1}}









\def\1{\bm{1}}




\def\rvv{{\mathbf{v}}}

\def\rvx{{\mathbf{x}}}





\def\vb{{\bm{b}}}
\def\vc{{\bm{c}}}

\def\vu{{\bm{u}}}
\def\vv{{\bm{v}}}
\def\vw{{\bm{w}}}
\def\vx{{\bm{x}}}

\def\vz{{\bm{z}}}



\DeclareMathAlphabet{\mathsfit}{\encodingdefault}{\sfdefault}{m}{sl}
\SetMathAlphabet{\mathsfit}{bold}{\encodingdefault}{\sfdefault}{bx}{n}











\newcommand{\E}{\mathbb{E}}

\newcommand{\R}{\mathbb{R}}



\newcommand{\xdim}{{D}}
\newcommand{\datasetdim}{{M}}

\newcommand{\estdim}{{K}}

\newcommand{\timedim}{{T}}
\newcommand{\hiddendim}{{H}}

\newcommand{\Rd}{\mathbb{R}^\xdim}

\newcommand{\Rone}{\mathbb{R}}


\newcommand{\pzero}{p_0}

\newcommand{\psigma}{p}
\newcommand{\pv}{p_\rvv}
\newcommand{\tx}{x}
\newcommand{\vtx}{\vx}

\newcommand{\pssigma}{p_\sigma}
\newcommand{\vttx}{\tilde{\vx}}

\newcommand{\loss}{\mathcal{L}}
\newcommand{\esm}{\mathcal{L}_\mathrm{ESM}}
\newcommand{\ism}{\mathcal{L}_\mathrm{ISM}}
\newcommand{\dsm}{\mathcal{L}_\mathrm{DSM}}
\newcommand{\ssm}{\mathcal{L}_\mathrm{SSM}}
\newcommand{\sm}{\mathcal{L}_\mathrm{SM}}
\newcommand{\total}{\mathcal{L}_\mathrm{Total}}
\newcommand{\reg}{\mathcal{L}_\mathrm{QC}}
\newcommand{\regtrace}{\reg^{\mathrm{tr}}}
\newcommand{\regest}{\reg^{\mathrm{est}}}
\newcommand{\regestt}{\tilde{\mathcal{L}}_\mathrm{QC}^{\mathrm{est}}}

\newcommand{\N}{\mathcal{N}}

\newcommand{\half}{\frac{1}{2}}

\newcommand{\grad}[2]{\frac{\partial #1}{\partial #2}}
\newcommand{\gradsec}[3]{\frac{\partial^2 #1}{\partial #2 \partial #3}}
\newcommand{\gradop}[1]{\frac{\partial}{\partial #1}}
\newcommand{\trace}[1]{\mathrm{tr}\left( #1 \right)}
\newcommand{\norm}[1]{\left \lVert #1 \right \rVert}
\newcommand{\forbnorm}[1]{\left \lVert #1 \right \rVert_F}
\newcommand{\exponential}[1]{\mathrm{exp} \left( #1 \right)}


\newcommand{\curlop}[1]{\overline{\mathrm{ROT}_{ij}} #1}
\newcommand{\curlbop}[1]{\overline{\mathrm{ROT}_{ij}} #1}
\newcommand{\stopgradop}[1]{\mathrm{sg}\left[ #1\right]}

\newcommand{\weightr}{R}
\newcommand{\weightw}{W}
\newcommand{\identity}{I}

\newcommand{\jacob}{J}
\newcommand{\param}{\theta}
\newcommand{\asym}{\textit{Asym}}
\newcommand{\normasym}{\textit{NAsym}}
\newcommand{\scoremodel}{s}
\newcommand{\energymodel}{E}


\icmltitlerunning{On Investigating the Conservative Property of Score-Based Generative Models}

\begin{document}

\twocolumn[
\icmltitle{On Investigating the Conservative Property of \\
Score-Based Generative Models}



\icmlsetsymbol{equal}{*}

\begin{icmlauthorlist}
\icmlauthor{Chen-Hao Chao}{nthu}
\icmlauthor{Wei-Fang Sun}{nthu,comp}
\icmlauthor{Bo-Wun Cheng}{nthu}
\icmlauthor{Chun-Yi Lee}{nthu}
\end{icmlauthorlist}

\icmlaffiliation{nthu}{Elsa Lab, Department of Computer Science, National Tsing Hua University, Taiwan.}
\icmlaffiliation{comp}{NVIDIA AI Technology Center, NVIDIA Corporation}

\icmlcorrespondingauthor{Chun-Yi Lee}{cylee@cs.nthu.edu.tw}

\icmlkeywords{Machine Learning, ICML}

\vskip 0.3in
] 


\printAffiliationsAndNotice{}  

\begin{abstract}
Existing Score-Based Models (SBMs) can be categorized into constrained SBMs (CSBMs) or unconstrained SBMs (USBMs) according to their parameterization approaches. CSBMs model probability density functions as Boltzmann distributions, and assign their predictions as the negative gradients of some scalar-valued energy functions. On the other hand, USBMs employ flexible architectures capable of directly estimating scores without the need to explicitly model energy functions. In this paper, we demonstrate that the architectural constraints of CSBMs may limit their modeling ability. In addition, we show that USBMs' inability to preserve the property of \textit{conservativeness} may lead to degraded performance in practice. To address the above issues, we propose Quasi-Conservative Score-Based Models (QCSBMs) for keeping the advantages of both CSBMs and USBMs. Our theoretical derivations demonstrate that the training objective of QCSBMs can be efficiently integrated into the training processes by leveraging the Hutchinson's trace estimator. In addition, our experimental results on the CIFAR-10, CIFAR-100, ImageNet, and SVHN datasets validate the effectiveness of QCSBMs. Finally, we justify the advantage of QCSBMs using an example of a one-layered autoencoder.
\end{abstract}
\section{Introduction}
\label{sec:introduction}

Score-Based Models (SBMs) are parameterized functions for estimating scores, which are vector fields corresponding to the gradients of log probability density functions. Based on their parameterization, SBMs can be categorized into constrained or unconstrained SBMs~\citep{salimans2021should}. 

Constrained SBMs (CSBMs), also known as Energy-Based Models (EBMs), model probability density functions as Boltzmann distributions, and assign their predictions as the negative gradients of some scalar-valued energy functions~\citep{salimans2021should}. CSBMs are able to ensure the \textit{conservativeness} of their output vector fields. This property is essential in guaranteeing that each updates in the sampling process are determined based on the probability ratio between two consecutive sampling steps~\citep{salimans2021should}. This, in turn, is necessary to ensure that the sample distribution converges to the true data distribution. Such a concept has been explored by the researchers of~\citep{pmlr-v32-cheni14, alain2014regularized, Nguyen2017PlugP, salimans2021should}. However, the parameterization of CSBMs requires specific designs, limiting the choices of model architectures for SBMs. For example, the authors of \citep{vincent2011connection, kamyshanska2013autoencoder, saremi2019approximating} proposed to construct an SBM as a neural network with symmetric weights in its linear layer, which hinders its ability to be extended to more sophisticated architectures. On the other hand, the authors of~\citep{salimans2021should, saremi2018deep, song2019sliced} divided an SBM into two halves: the first half explicitly parameterizes the negative energy function, while the second half is generated by automatic differentiation tools~\citep{martens2012estimating} to output the estimated scores. Nevertheless, these methods require that the output of the first half can only be a scalar, and the second half has to be generated using automatic differentiation tools.

In contrast, unconstrained SBMs (USBMs) employ flexible architectures capable of directly estimating the scores without explicitly modeling the energy functions. Due to their architectural flexibility, USBMs have been extensively utilized in contemporary machine learning tasks such as image generation~\citep{song2019generative,ho2020denoising,song2020improved,song2021scorebased,nichol2021improved}. Among these works, \citet{song2021scorebased} proposed a unified framework based on a USBM, which achieved superior performance on several benchmarks. Their success demonstrated that architectural flexibility can be beneficial for SBMs. However, in spite of the empirical benefit of employing USBMs, recent research~\citep{Karras2022edm} suggests that the non-conservativeness of a USBM can cause detrimental effects on its sampling quality. In addition, our analyses in Section~\ref{sec:motivational_example} indicate that USBMs' inability to ensure conservativeness may lead to degraded sampling performance. 

To preserve both the conservativeness of CSBMs and the architectural flexibility of USBMs, we propose Quasi-Conservative Score-Based Models (QCSBMs). Instead of constraining the model architecture, QCSBMs resort to enhancing the conservativeness of USBMs through a regularization loss. Our theoretical derivations demonstrate that such a regularization term can be integrated into the training processes of SBMs efficiently through the Hutchinson's trace estimator~\citep{hutchinson1989stochastic}. Moreover, our experimental results showcase that the performance of Noise Conditional Score Network++ (NCSN++)~\citep{song2021scorebased} can be further improved by incorporating our regularization method on the CIFAR-10, CIFAR-100~\citep{krizhevsky2009learning}, ImageNet-32x32~\citep{van2016pixel}, and SVHN~\citep{Netzer2011} datasets.
\section{Background and Related Works}
\label{sec:background}
In this section, we walk through the background material and the related works for understanding the contents of this paper. We first introduce a number of score matching methods for training an SBM. Next, we describe the sampling algorithms for generating samples through an SBM. Lastly, we elaborate on the conservative property of SBMs, and the differences between CSBMs and USBMs.

\subsection{Score Matching Methods}
\label{sec:background:score_matching}
Score matching~\citep{JMLR:v6:hyvarinen05a} describes the learning process to approximate the score function $\gradop{\vtx} \log\psigma(\vtx)$ using a neural network $\scoremodel(\cdot\ ;\param):\Rd\to \Rd$, which is parameterized by $\param$ and is trained through minimizing the Explicit Score Matching (ESM) objective expressed as follows:
\begin{equation} 
\label{eq:esm}
\esm(\param) = \E_{\psigma(\vtx)} \left[ \half \norm{\scoremodel(\vtx;\param) - \grad{\log \psigma(\vtx)}{\vtx}}^2 \right].
\end{equation}

Eq.~(\ref{eq:esm}) involves explicit evaluation of the true score function $\gradop{\vtx} \log\psigma(\vtx)$, which is intractable for learning tasks without the true probability density function $\psigma(\vtx)$. To address this issue, an alternative method called Implicit Score Matching (ISM)~\citep{JMLR:v6:hyvarinen05a}, which excludes $\gradop{\vtx} \log\psigma(\vtx)$ in the training objective, was introduced to train $\scoremodel(\vtx;\param)$. ISM employs an equivalent loss $\ism$ expressed as follows:
\begin{equation} 
\label{eq:ism}
\ism(\param) = \E_{\psigma(\vtx)} \left[ \half\norm{\scoremodel(\vtx;\param)}^2 + \trace{\grad{\scoremodel(\vtx;\param)}{\vtx}} \right],
\end{equation}

where $\gradop{\vtx}\scoremodel(\vtx;\param)$ corresponds to the Jacobian matrix of $\scoremodel(\vtx;\param)$, and $\trace{\cdot}$ denotes the trace of a matrix. Although $\ism$ avoids the evaluation of $\gradop{\vtx} \log\psigma(\vtx)$, the explicit calculation of $\trace{\gradop{\vtx}\scoremodel(\vtx;\param)}$ in Eq.~(\ref{eq:ism}) still requires $\xdim$ times of backpropagations~\citep{song2019sliced}, which hinders $\ism$'s ability of being utilized in high-dimensional context. To alleviate it, a scalable objective, called Sliced Score Matching (SSM)~\citep{song2019sliced} loss, was proposed to approximate $\trace{\gradop{\vtx}\scoremodel(\vtx;\param)}$ in $\ism$ with the Hutchinson's trace estimator~\citep{hutchinson1989stochastic}.
Given a vector $\vv$ drawn from a distribution $\pv(\vv)$ satisfying $\E_{\pv(\vv)}\left[\vv\vv^T\right]=\identity$, the Hutchinson trace estimator replaces the trace of a square matrix $A$ with $\E_{\pv(\vv)}\left[\vv^T A\vv\right]$, which can be derived as:
\begin{equation} 
\label{eq:trace_est}
\begin{aligned}
\trace{A} &= \trace{A\identity} = \trace{A\E_{\pv(\vv)}[\vv\vv^T]} \\
&= \E_{\pv(\vv)} [\trace{A\vv\vv^T}] = \E_{\pv(\vv)} [\vv^TA\vv].
\end{aligned}
\end{equation}
The above derivation suggests that $\trace{\gradop{\vtx}\scoremodel(\vtx;\param)}$ in Eq.~(\ref{eq:ism}) can be substituted with $\E_{\pv(\vv)}[\vv^T \gradop{\vtx}\scoremodel(\vtx;\param) \vv]$, resulting in an equivalent objective $\ssm$ expressed as follows:
\begin{equation} 
\label{eq:ssm}
\ssm(\param) = \E_{\psigma(\vtx)\pv(\vv)} \left[ \half \norm{\scoremodel(\vtx;\param)}^2 +\vv^T \grad{\scoremodel(\vtx;\param)}{\vtx} \vv \right].
\end{equation}

The vector-Jacobian product $\vv^T \gradop{\vtx}\scoremodel(\vtx;\param)$ can be calculated with a single backward propagation using automatic differentiation~\citep{martens2012estimating}, and the expectation can be approximated using $\estdim$ independently sampled vectors $\{\vv^{(i)}\}_{i=1}^{\estdim}$. Therefore, the computation of $\E_{\pv(\vv)} \left[ \vv^T \gradop{\vtx}\scoremodel(\vtx;\param) \vv\right]$ in Eq.~(\ref{eq:ssm}) can be less expensive than $\trace{\gradop{\vtx}\scoremodel(\vtx;\param)}$ in Eq.~(\ref{eq:ism}) when $\estdim \ll \xdim$. The Denoising Score Matching (DSM) loss is another scalable objective formulated based on the Parzen density estimator~\citep{vincent2011connection}, which further prevents the computational overhead incurred by the gradient operation in $\ssm$:
\begin{equation} 
\label{eq:dsm}
\dsm(\param) 
= \E_{\pssigma(\vttx|\vx)p(\vx)} \left[ \half \norm{\scoremodel(\vttx;\param) - \grad{\log \psigma(\vttx|\vx)}{\vttx}}^2 \right],
\end{equation}
where $\pssigma(\vttx|\vx)\triangleq \frac{1}{(2\pi)^{\xdim/2}\sigma^\xdim}e^{\frac{-1}{2\sigma^2}\norm{\vttx-\vx}^2}$ is is an isotropic Gaussian smoothing kernel with a standard deviation $\sigma$, and $\gradop{\vttx}\log \psigma(\vttx|\vx)=\frac{1}{\sigma^2} (\vx -\vttx)$. Since the computational cost of $\dsm$ is relatively low in comparison to the other score matching losses, it has been widely adopted in contemporary modeling methods~\citep{song2019generative,song2020improved,song2021scorebased, song2021maximum} that pursue training efficiency.

\subsection{Sampling Process}
\label{sec:background:sampling}
Given an optimal SBM $\scoremodel(\vtx;\param)=\gradop{\vtx} \log \psigma(\vtx),\,\forall \vtx\in\Rd$ which minimizes the score-matching objectives (i.e., Eqs.~(\ref{eq:esm}),~(\ref{eq:ism}),~(\ref{eq:ssm}), and~(\ref{eq:dsm})), Langevin dynamics~\citep{bj/1178291835, roberts1998optimal} enables $\psigma(\vtx)$ to be iteratively approximated through the following equation:
\begin{equation} 
\label{eq:langevin}
\vtx_{t+1} = \vtx_{t} + \alpha \scoremodel(\vtx;\param) + \sqrt{2\alpha} \vz_t,
\end{equation}
where $\alpha$ is the step size, $t$ is the timestep, $\vz_t\in \Rd$ is a noise vector sampled from a normal distribution $\N(0,I)$. Under the condition where $\alpha \rightarrow 0$ and $\timedim\rightarrow \infty$, $\vtx_\timedim$ can be generated as if it is directly sampled from $\psigma(\vtx)$~\citep{roberts1998optimal, welling2011bayesian}. Despite the theoretical guarantee of Langevin dynamics, it empirically suffers from the slow mixing issue as discussed by~\citep{song2019generative}, which limits its ability of being utilized in practical data generation scenarios. To resolve this issue, \citet{song2021scorebased} proposed to extend Eq.~(\ref{eq:langevin}) to a time-inhomogeneous variant by making the noise scale $\sigma$, the score model $\scoremodel(\cdot\ ;\param)$, and step size $\alpha$ dependent on $t$. Specifically, they consider a continuous sampling process defined using a stochastic differential equation as follows: 
\begin{equation} 
\label{eq:sde}
    d\vtx = [\mathbf{f}(\vtx,t) - g(t)^2 \scoremodel(\vtx,t;\theta) ]dt + g(t) d\bar{\vw},
\end{equation}
where $dt$ is an infinitesimal negative timestep, $\bar{\vw}$ represents the Wiener process, $\mathbf{f}(\cdot, t)$ is the drift coefficient, and $g(t)$ is the diffusion coefficient. Contemporary score-based generation frameworks~\citep{ho2020denoising, song2021scorebased, song2021maximum, nichol2021improved, Xu2022PoissonFG} implement such a sampling process in two different ways according to the discretization method used. One branch of them~\citep{ho2020denoising, song2021scorebased} follows the concept of Eq.~(\ref{eq:langevin}) to discretize Eq.~(\ref{eq:sde}) using equal-sized steps. The other branch of them~\citep{song2021maximum, Xu2022PoissonFG} leverages an ordinary differential equation (ODE) solver to solve the deterministic variant of Eq.~(\ref{eq:sde}) using adaptive step sizes.

\subsection{Conservativeness and Rotation Density of a Score-Based Model}
\label{sec:background:conservativeness}
A vector field is said to be \textit{conservative} if it can be written as the gradient of a scalar-valued function~\citep{im2016conservativeness}. As proved in~\citep{im2016conservativeness}, the output vector field of an SBM $\scoremodel(\cdot\, ;\param)$ is said to be conservative over a smooth and simply-connected domain $\mathbb{S}\subseteq \Rd$ if and only if its Jacobian is symmetry for all $\vtx \in \mathbb{S}$, which can be equivalently expressed as the zero-rotation-density condition expressed as follows:
\begin{equation} 
\label{eq:curl}
\curlop{\scoremodel(\vtx;\param)} \triangleq \grad{\scoremodel(\vtx;\param)_i}{\vtx_j} - \grad{\scoremodel(\vtx;\param)_j}{\vtx_i} = 0,
\end{equation}
where $1 \leq i,j \leq \xdim$, $\overline{\mathrm{ROT}_{ij}}$ is the rotation density operator~\citep{Glotzl2020HelmholtzDA}, and $\gradop{\vtx_j} \scoremodel(\vtx;\param)_i$ corresponds to the gradient of the $i$-th element of $\scoremodel(\vtx;\param)$ with respect to the $j$-th element of $\vtx$. $\curlop{\scoremodel(\vtx;\param)}$ in Eq.~(\ref{eq:curl}) describes the infinitesimal circulation of $\scoremodel(\vtx ;\param)$ around $\vtx$.

For CSBMs, $\psigma(\vtx)$ is modeled as a Boltzman distribution $p(\vtx;\param)=\exponential{-\energymodel(\vtx;\param)}/Z(\param)$, where $\exponential{\cdot}$ indicates the exponential function, $\energymodel(\cdot\ ;\param):\Rd \to \Rone$ represents a scalar-valued energy function, and $Z(\param)$ refers to the partition function. Therefore, the output vector field of a CSBM can be represented as $\scoremodel(\vtx;\param)=\gradop{\vtx} \log p(\vtx;\param)=-\gradop{\vtx} \energymodel(\vtx;\param)$. This implies that $\scoremodel(\vtx;\param)$ is conservative. In other words, $\scoremodel(\vtx;\param)$ satisfies the zero-rotation-density condition in Eq.~(\ref{eq:curl}), since the mixed second derivatives of $\energymodel(\vtx;\param)$ are equivalent~\citep{alain2014regularized}, which can be shown as the following:
\begin{equation} 
\label{eq:curl_energy}
\curlop{\scoremodel(\vtx;\param)} = \gradsec{\energymodel(\vtx;\param)}{\vtx_j}{\vtx_i} - \gradsec{\energymodel(\vtx;\param)}{\vtx_i}{\vtx_j} = 0.
\end{equation}
Unlike CSBMs, USBMs aim to directly parameterize the true score function $\gradop{\vtx} \log p(\vtx)$ using a vector-valued function $s(\cdot\,;\param):\Rd\to \Rd$. The conservativeness of USBMs is not guaranteed, as $s(\cdot\,;\param)$ does not necessarily correspond to the gradients of a scalar-valued function. Although it is possible to ensure the conservativeness of an USBM under an ideal scenario that $\scoremodel(\vtx;\param)$ perfectly minimizes the score matching target, a trained USBM typically contains approximation errors in practice. This suggests that USBMs are non-conservative in most cases, and do not satisfy the zero-rotation-density condition.
\section{Motivational Examples}
\label{sec:motivational_example}
In this section, we demonstrate the importance of preserving the conservativeness as well as the architectural flexibility of SBMs. In addition, we provide the motivation behind the adoption of QCSBMs through two experiments. 

\begin{figure*}[t]
    \centering
    \includegraphics[width=\linewidth]{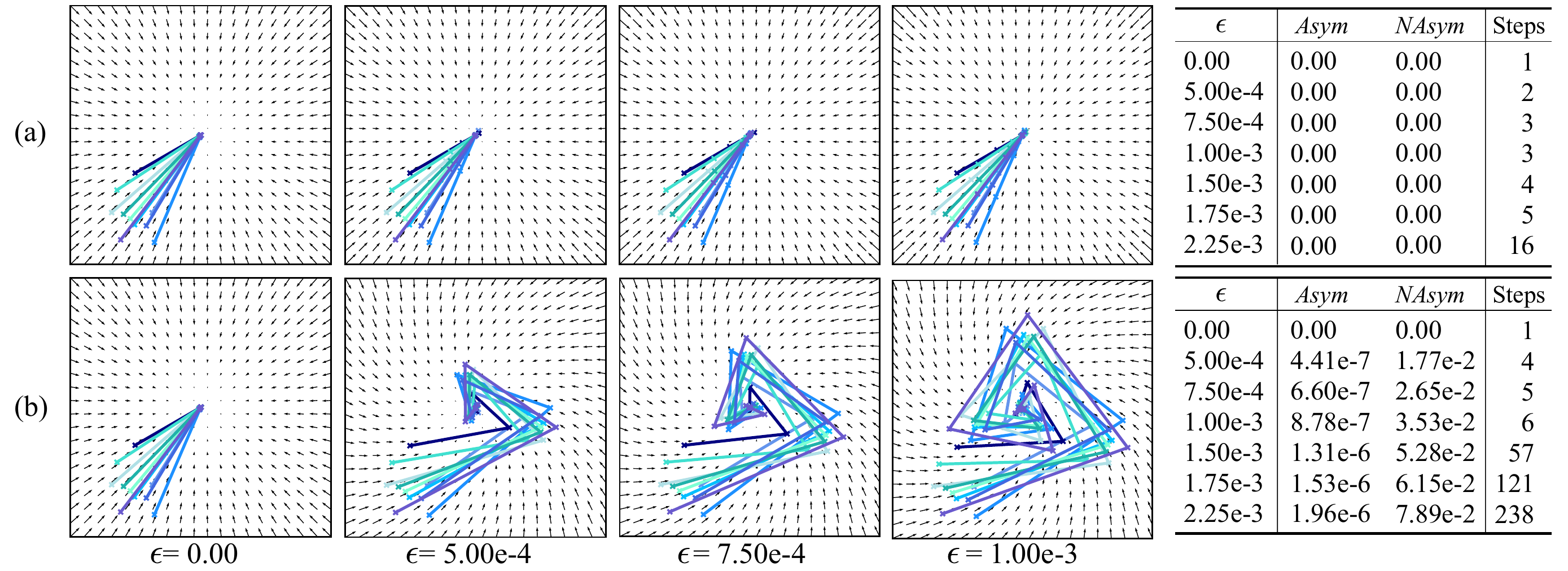}
    \vspace{-1.5em}
    \caption{The visualized examples of (a) the conservative $\scoremodel_C$ and (b) the non-conservative $\scoremodel_U$ under different choices of $\epsilon$. The table on the right-hand side reports the results measured using the $\asym$ and $\normasym$ metrics as well as the number of sampling steps. For a better data visualization, the vector fields are normalized with the maximum norm of $\scoremodel_U$ and $\scoremodel_C$ in each plot.}
    \label{fig:langevin_exp}
\end{figure*}
\subsection{The Influences of Non-Conservativeness on Sampling Process}
\label{sec:motivational_example:curl}
The sampling processes described in Section~\ref{sec:background:sampling} are formulated under a premise that $\scoremodel(\vtx;\param)=\gradop{\vtx} \log \psigma(\vtx)$. In practice, however, a trained USBM contains approximation errors, which could lead to its failure in preserving the conservativeness, as discussed in Section~\ref{sec:background:conservativeness}. In this example, we inspect the impact of the non-conservativeness of USBMs on the sampling process by comparing the sampling efficiency of a USBM and a CSBM under the same approximation error $\epsilon$, i.e., $\esm =\epsilon$. To quantitatively evaluate the non-conservativeness of these SBMs, we measure the magnitude of $\curlop{\scoremodel(\vtx;\param)}$ using the asymmetry metric $\asym\in[0,\infty)$ defined as:
\begin{equation} 
\label{eq:asym}
\begin{aligned}
    &\E_{\psigma(\vtx)}\Bigg[ \half\sum_{i,j=1}^\xdim \left(\curlop{\scoremodel(\vtx;\param)}\right)^2\Bigg]\\
    &=\E_{\psigma(\vtx)}\left[\half \forbnorm{\jacob - \jacob^T}^2\right],
\end{aligned}
\end{equation}
where $\jacob=\gradop{\vtx} \scoremodel(\vtx)$, $\forbnorm{\cdot}$ is the Frobenius norm. Under a regularity condition that $p(\vx)>0,\forall \vx\in \Rd$, $\asym$ equals to 0 if and only if $\scoremodel(\cdot\, ;\param)$ satisfies the zero-rotation-density condition (i.e., Proposition~\ref{prop:asym_cons}). We also measure its normalized variant $\normasym\in[0,1]$ defined as $\E_{\psigma(\vtx)}\left[\forbnorm{\jacob - \jacob^T}^2/(4\forbnorm{\jacob}^2)\right]$ and derived in Section~\ref{apx:asymmetry}. To evaluate the efficiency of the sampling process, we calculate the number of steps required for all sample points to move to the target during the sampling process. In this example, the USBM and the CSBM are denoted as $\scoremodel_{\mathrm{U}}$ and $\scoremodel_{\mathrm{C}}$, and constructed based on Eq.~(\ref{eq:scoremodel_epsilon}) presented in Appendix~\ref{apx:configuration:curl}.

For an illustrative purpose, we present the visualization of the sampling processes as well as the evaluation results under different choices of $\epsilon$ for two specific designs of $\scoremodel_{\mathrm{U}}$ and $\scoremodel_{\mathrm{C}}$ in Figs.~\ref{fig:langevin_exp}~(a)~and~(b), respectively. As demonstrated in the visualized trajectories in Fig.~\ref{fig:langevin_exp}~(b), the existence of the non-conservativeness in $\scoremodel_{\mathrm{U}}$ incurs rotational vector fields tangent to the true score function, leading to inefficient updates during the sampling processes. In addition, the evaluation results in terms of $\asym$, $\normasym$, and the number of sampling steps further reveal that $\scoremodel_{\mathrm{U}}$ requires more function evaluations during the sampling process than $\scoremodel_{\mathrm{C}}$ under the same score-matching error $\epsilon$. The above experimental evidences thus demonstrate that the non-conservativeness of a USBM may decelerate the sampling processes, which, in turn, influences the final sampling performance of it.

\vspace{1em}
\begin{table*}[t]
    \renewcommand{\arraystretch}{1.05}
    \newcommand{\boldtoprule}{\midrule[0.7pt]}
    \newcommand{\thline}{\midrule[0.3pt]}
    \centering
    \caption{The evaluation results of CSBMs, USBMs, and QCSBMs in terms of their means and confidence intervals of three independent runs on the `8-Gaussian,' `Spirals,' and `Checkerboard' datasets, which are detailed in Appendix~\ref{apx:configuration:motivational_example}. The arrow symbols $\uparrow$ / $\downarrow$ indicate that higher / lower values correspond to better performance, respectively.}
    \vspace{0.5em}
    \tiny
    \resizebox{\linewidth}{!}{%
    \begin{tabular}{c|c|cccc|cc}
        \boldtoprule
        Dataset & Model& $\asym$ ($\downarrow$) & $\normasym$ ($\downarrow$) & Score Error ($\downarrow$) & NLL ($\downarrow$) & Precision ($\uparrow$) & Recall ($\uparrow$)\\ 
        \thline
        \multirow{3}{*}{8-Gaussian}   & CSBM   & \textbf{0.00$\pm$0.00 e-3}& \textbf{0.00$\pm$0.00 e-3} & 4.34$\pm$0.06 e-1 & 5.24$\pm$0.03 e+0 & 0.9107$\pm$0.0151 & 0.9057$\pm$0.0094\\
                                      & USBM   & 1.06$\pm$0.16 e-2 & 1.42$\pm$0.43 e-3 & \textbf{4.14$\pm$0.13 e-1} & 5.02$\pm$0.03 e+0 & 0.9453$\pm$0.0061 & 0.8969$\pm$0.0191  \\
                                      & QCSBM  & 9.49$\pm$3.07 e-3 & 1.38$\pm$0.41 e-3  & 4.20$\pm$0.07 e-1 & \textbf{5.01$\pm$0.09 e+0} & \textbf{0.9558$\pm$0.0022} & \textbf{0.9116$\pm$0.0154}\\
        \thline
        \multirow{3}{*}{Spirals}      & CSBM   & \textbf{0.00$\pm$0.00 e-1}& \textbf{0.00$\pm$0.00 e-2} & 1.59$\pm$0.05 e+0 & 5.61$\pm$0.17 e+0 & 0.6221$\pm$0.0300 & 0.7725$\pm$0.0624\\
                                      & USBM   & 7.35$\pm$0.41 e-1 & 8.19$\pm$1.82 e-2  & \textbf{1.50$\pm$0.18 e+0}  & 5.11$\pm$0.11 e+0 & 0.5911$\pm$0.0573 & 0.8230$\pm$0.0119 \\
                                      & QCSBM  &  3.18$\pm$0.16 e-1 & 5.73$\pm$1.64 e-2 & 1.56$\pm$0.08 e+0 & \textbf{5.04$\pm$0.04 e+0} & \textbf{0.6489$\pm$0.0167} & \textbf{0.8244$\pm$0.0204}\\
        \thline
        \multirow{3}{*}{Checkerboard} & CSBM  & \textbf{0.00$\pm$0.00 e-2}& \textbf{0.00$\pm$0.00 e-2} & 7.65$\pm$0.36 e-1 & 5.19$\pm$0.08 e+0 & 0.8990$\pm$0.0072 & 0.9217$\pm$0.0309\\
                                      & USBM  & 1.05$\pm$0.18 e-1 & 2.51$\pm$0.43 e-2 & \textbf{6.79$\pm$0.27 e-1} & 5.09$\pm$0.02 e+0 & 0.9209$\pm$0.0089 & 0.9409$\pm$0.0302\\
                                      & QCSBM & 7.06$\pm$0.43 e-2 & 1.70$\pm$0.18 e-2 & \textbf{6.79$\pm$0.26 e-1} & \textbf{5.08$\pm$0.02 e+0}& \textbf{0.9216$\pm$0.0027} & \textbf{0.9496$\pm$0.0156}\\
        \boldtoprule
    \end{tabular}}
    \label{tab:toy}
\end{table*}
\subsection{The Impacts of Architectural Flexibility on Modeling Ability and Sampling Performance}
\label{sec:motivational_example:flexibility}
To ensure the conservative property of an SBM, previous works~\citep{saremi2018deep, salimans2021should} proposed to construct the architecture such that its output vector field can be described as the gradients of a scalar-valued function. This design, however, potentially limits the modeling ability of an SBM. In this experiment, we empirically examine the influence of architectural flexibility on both the training and sampling processes. For a fair evaluation, a USBM $\scoremodel_\mathrm{U}$ and a CSBM $\scoremodel_\mathrm{C}$ are implemented as neural networks consisting of the same number of parameters. Following the approach described in~\citep{salimans2021should}, these two models are represented as follows:
\begin{equation} 
\label{eq:conversion1}
\begin{aligned}
\scoremodel_\mathrm{U}(\vtx,t;\param_\mathrm{U})&=\frac{1}{\sigma_t} (\vtx-f(\vtx,t;\param_\mathrm{U})),\\
\scoremodel_\mathrm{C}(\vtx,t;\param_\mathrm{C}) &= -\frac{1}{2\sigma_t} \grad{\left \lVert \vtx - f(\vtx,t;\param_\mathrm{C})\right\rVert^2}{\vtx},
\end{aligned}
\end{equation}
where $f:\Rd \to\Rd$ is a neural network, and $\param_\mathrm{U}$ and $\param_\mathrm{C}$ are the parameters. The former is a USBM similar to that used in \citep{song2019generative}, while the latter corresponds to its conservative variant explored by \citet{salimans2021should}. We then compare the conservativeness, the modeling ability, and the sampling performance of both $\scoremodel_\mathrm{U}$ and $\scoremodel_\mathrm{C}$, which are trained independently on three two-dimensional datasets. The conservativeness is measured using $\asym$ and $\normasym$. The modeling ability of an SBM is evaluated based on its score-matching and likelihood-matching abilities, which are quantified using the score-matching error ($\esm$) and the negative log likelihood (NLL) metric, respectively. The sampling performance is evaluated using the Precision and Recall metrics~\citep{Kynknniemi2019ImprovedPA}, which measures the distances between the true samples and the generated samples based on $k$-nearest neighbor algorithm. 

Table~\ref{tab:toy} reports the results of the above setting. The columns `Score Error' and `NLL' in Table~\ref{tab:toy} demonstrate that the USBMs consistently deliver better modeling performance in comparison to the CSBMs, suggesting that their architectural flexibility is indeed beneficial to the training process. On the other hand, due to the potential impact of their non-conservativeness, USBMs are unable to consistently achieve superior results on the precision and recall metrics, as shown in the last two columns of Table~\ref{tab:toy}. The above observations thus indicate that the architectural flexibility of a USBM is crucial to its score-matching and likelihood-matching abilities. Nevertheless, its non-conservativeness may cause negative impacts on its sampling performance. 

The experimental clues in Sections~\ref{sec:motivational_example:curl}~and~\ref{sec:motivational_example:flexibility} shed light on two essential issues to be further explored and addressed. First, although USBMs benefit from their architectural flexibility, their non-conservativeness may lead to degraded sampling performance. Second, despite that CSBMs are conservative, their architectural requirement may limit their modeling abilities in practice. Based on the above observations, this paper intends to investigate a new type of SBMs, called Quasi-Conservative Score-based Models (QCSBMs), which are developed to maintain both the conservativeness as well as the architectural flexibility. As revealed in Table~\ref{tab:toy}, QCSBMs are able to achieve improved results in terms of their conservativeness without sacrificing its modeling ability. In the next section, we elaborate on the formulation and implementation of QCSBMs. 

\vspace{1.2em}
\section{Methodology}
\label{sec:methodology}
In this section, we introduce QCSBMs and present an efficient implementation of them. In Section~\ref{sec:methodology:regularization}, we describe the learning objective of QCSBMs, and derive its scalable variant. In Section~\ref{sec:methodology:implementation}, we detail the training procedure for QCSBMs, and discuss our implementation of its forward and backward propagation processes.


\begin{figure*}[t]
    \vspace{-0.5em}
    \centering
    \includegraphics[width=\linewidth]{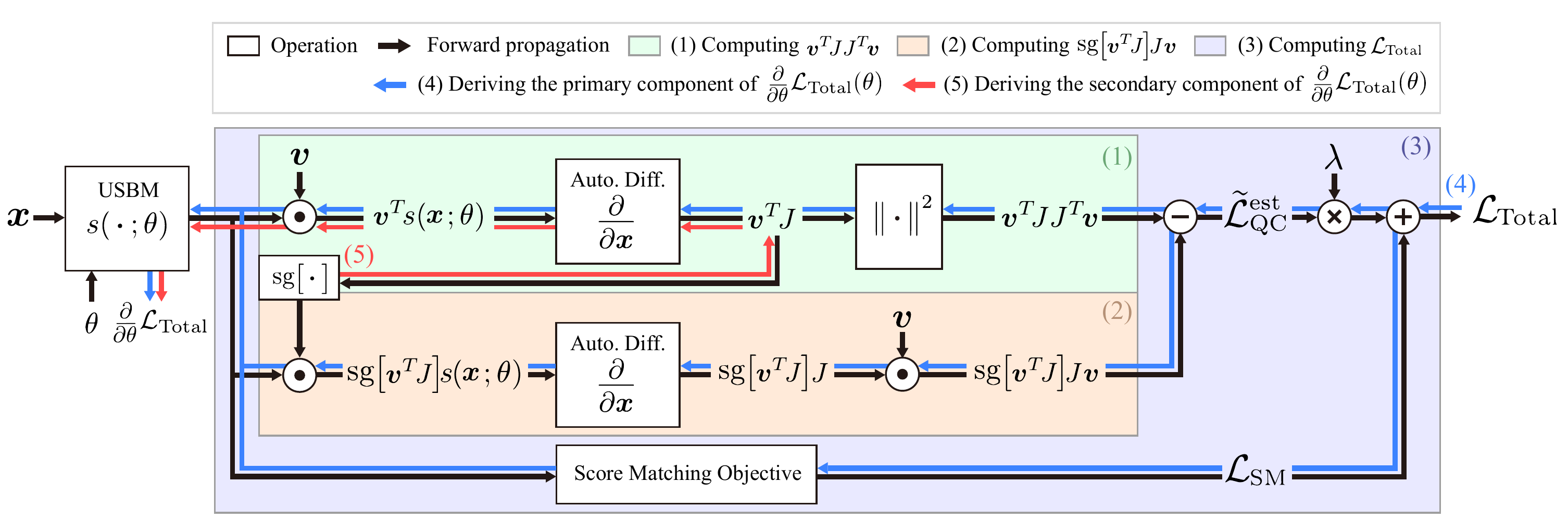}
    \vspace{-2.5em}
    \caption{The computational graph of $\total$ in QCSBMs. The `Auto. Diff.' blocks represent the operation of differentiating $\vu^T \scoremodel(\vtx;\param)$, where $\vu$ is a constant vector with respect to $\vtx$.}
    \label{fig:training}
    \vspace{-0.8em}
\end{figure*}
\subsection{Quasi-Conservative Score-Based Models}
\label{sec:methodology:regularization}
Instead of following the concept of CSBMs to ensure the conservativeness through architectural constraints, QCSBMs resort to penalizing the non-conservativeness through a regularization loss. The training objective for QCSBMs is defined as $\total$, which is expressed as follows:
\begin{equation} 
\label{eq:loss_total}
\total(\param) = \sm(\param) + \lambda \reg(\param),
\end{equation}
where $\sm$ can be any one of the score-matching objectives (i.e., Eqs.~(\ref{eq:esm}),~(\ref{eq:ism}),~(\ref{eq:ssm}), or~(\ref{eq:dsm})), $\reg$ represents the regularization term reflecting the non-conservativeness, and $\lambda$ is a balancing factor. As discussed in Section~\ref{sec:motivational_example:curl}, the non-conservativeness of a USBM can be measured using the magnitude of its rotation densities in the Frobenius norm (i.e., Eq.~(\ref{eq:asym})), suggesting a formulation of $\reg$ as:
\begin{equation} 
\label{eq:reg_qc}
\reg(\param) = \E_{\psigma(\vtx)} \left[ \half \norm{\jacob - \jacob^T}^2_F \right],
\end{equation}
where $\jacob = \gradop{\vtx} \scoremodel(\vtx;\param)$. This objective function, however, requires $\xdim$ times of backpropagations to explicitly calculate the Jacobian matrix of $\scoremodel(\vtx;\param)$. In order to reduce the computational cost, we first derive an equivalent objective $\regtrace$, and then utilize the Hutchinson's trace estimator to approximate it. The loss $\regtrace$ is derived in Appendix~\ref{apx:derivations:reg_trace}, and formulated as follows:
\begin{equation} 
\label{eq:reg_qc_trace}
\regtrace(\param) = \E_{\psigma(\vtx)} \left[ \trace{\jacob \jacob^T} - \trace{ \jacob \jacob} \right].
\end{equation}
By applying the Hutchinson's trace estimator to both $\trace{\jacob \jacob^T}$ and $\trace{\jacob \jacob}$ according to Eq.~(\ref{eq:trace_est}), $\regtrace$ can be re-expressed using an unbiased objective $\regest$, which is defined as the following:
\begin{equation} 
\label{eq:reg_qc_est}
\begin{aligned}
\regest(\param) &= \E_{\psigma(\vtx)} \left[ \E_{\pv(\vv)} \left[\vv^T \jacob \jacob^T \vv \right]- \E_{\pv(\vv)} \left[\vv^T \jacob \jacob \vv \right]\right].
\end{aligned}
\end{equation}

Since $\E_{\pv(\vv)} \left[\vv^T \jacob \jacob^T \vv \right]$ and $\E_{\pv(\vv)} \left[\vv^T \jacob \jacob \vv \right]$ can be simultaneously approximated, $\regest$ in Eq.~(\ref{eq:reg_qc_est}) can be rewritten as a variant $\regestt$ defined as follows:
\begin{equation} 
\label{eq:reg_qc_estt}
\begin{aligned}
\regestt(\param) &= \E_{\psigma(\vtx)} \left[ \E_{\pv(\vv)} \left[\vv^T \jacob \jacob^T \vv -\vv^T \jacob \jacob \vv \right]\right].
\end{aligned}
\end{equation}
The implementation of Eq.~(\ref{eq:reg_qc_estt}) can be more efficient than that of Eq.~(\ref{eq:reg_qc_est}) as the vector-Jacobian product $\vv^T \jacob$ in Eq.~(\ref{eq:reg_qc_estt}) can be calculated once and reused in the computation of both $\vv^T \jacob \jacob^T \vv$ and $\vv^T \jacob \jacob \vv$. We provide a detailed description of such an approach in the following section.

\subsection{The Training Procedure of QCSBMs}
\label{sec:methodology:implementation}
\begin{algorithm}[tb]
   \caption{Training Procedure of QCSBM}
   \label{alg:training}
\begin{algorithmic}
\setstretch{1.15}
      \STATE {\bfseries Input:} $\vtx$, $\vv$, $\scoremodel(\cdot\ ;\param)$, $\lambda$ \\
      \STATE \textcolor{gray}{(1) Computing $\vv^T \jacob \jacob^T \vv$.}  \\
      \STATE $\vv^T\jacob\leftarrow \gradop{\vtx} [\vv^T \scoremodel(\vtx;\param)]$ \\
      \STATE $\vv^T \jacob \jacob^T \vv \leftarrow \norm{\vv^T \jacob}^2$ \\
      \vspace{0.5em}
      \STATE \textcolor{gray}{(2) Computing $\vv^T \jacob \jacob \vv$.} \\
      \STATE $\stopgradop{\vv^T \jacob} \jacob \leftarrow \gradop{\vtx} [\stopgradop{\vv^T \jacob} \scoremodel(\vtx;\param)]$ \\
      \STATE $\stopgradop{\vv^T \jacob} \jacob \vv  \leftarrow \stopgradop{\vv^T \jacob} \jacob \cdot \vv$ \\
      \vspace{0.5em}
      \STATE \textcolor{gray}{(3) Computing $\total(\param)$.} \\
      \STATE $\regestt(\param)\leftarrow\vv^T \jacob \jacob^T \vv -\stopgradop{\vv^T \jacob} \jacob \vv$ \\
      \STATE $\sm (\param) \leftarrow$ Eq.~(\ref{eq:esm}),~(\ref{eq:ism}),~(\ref{eq:ssm}), or (\ref{eq:dsm}) \\
      \STATE $\total (\param) \leftarrow \sm(\param)+\lambda\regestt(\param)$ \\
      \vspace{0.5em}
      \STATE \textcolor{gray}{(4) Deriving the primary component of $\frac{\partial}{\partial \param}\total(\param)$.} \\
      \STATE Perform backpropagation through the blue arrows. \\
      \vspace{0.5em}
      \STATE \textcolor{gray}{(5) Deriving the secondary component of $\frac{\partial}{\partial \param}\total(\param)$.} \\
      \STATE Perform backpropagation through the red arrows. \\
      \vspace{0.5em}
      \STATE Update $\param$ with $\frac{\partial}{\partial \param}\total(\param)$.
\end{algorithmic}
\end{algorithm}

In this subsection, we walk through the proposed training procedure of QCSBMs. The training procedure is detailed in Algorithm~\ref{alg:training}, and the corresponding computational graph is illustrated in Fig.~\ref{fig:training}. For the sake of notational simplicity, we assume that both the batch size and the number of random vectors $\estdim$ are 1 in Algorithm~\ref{alg:training} and Fig.~\ref{fig:training}. The entire training procedure is divided into five steps, denoted as Steps (1)$\sim$(5), respectively. Steps (1)$\sim$(3) describe the forward propagation process of $\total(\param)$, which is depicted by the black arrows in Fig.~\ref{fig:training}. 
Steps (4) and (5) correspond to the backpropagation processes of the two gradient components comprising $\gradop{\param}\total(\param)$, which are named the primary and secondary components, and are depicted as the blue and red arrows in Fig.~\ref{fig:training}, respectively. The detailed formulations for these two components and the rationale behind such a two-step backpropagation process are further elaborated in Appendix~\ref{apx:optimization}. Please note that the symbol $\stopgradop{\cdot}$ used in Algorithm~\ref{alg:training} represents the `stop gradient' operation, which is adopted to disconnect the computational graph.

Based on the implementation described in Algorithm~\ref{alg:training}, the computation of $\regestt$ can be more efficient than $\regest$ since it does not require repeatedly calculating the vector-Jacobian product $\vv^T \jacob$ during forward propagation. Moreover, unlike $\reg$ and $\regtrace$, $\regestt$ does not require $\xdim$ times of backward propagation, which justifies its computational efficiency.
\section{Experiments on Real-World Datasets}
\label{sec:experimental_results}
\begin{table}
\renewcommand{\arraystretch}{1.1}
    \newcommand{\boldtoprule}{\toprule[1.1pt]}
    \newcommand{\thline}{\midrule[0.3pt]}
    \centering
    \vspace{-0.8em}
    \caption{The NLL, $\asym$, and $\normasym$ of C-NCSN++, U-NCSN++, and QC-NCSN++ evaluated on the CIFAR-10, CIFAR-100, ImageNet-32x32, and SVHN datasets.}
    \vspace{0.2em}
    \footnotesize
    \resizebox{\linewidth}{!}{
    \begin{tabular}{c|ccc|ccc}
        \boldtoprule
          & \multicolumn{3}{c|}{CIFAR-10} & \multicolumn{3}{c}{ImageNet-32x32} \\
        \thline
        Method   & NLL & $\asym$ & $\normasym$ & NLL & $\asym$& $\normasym$ \\
        \thline
        C-NCSN++  & 5.91 & \textbf{0.00} &  \textbf{0.00}  & 5.10 & \textbf{0.00} & \textbf{0.00}  \\
        U-NCSN++   & 3.46 & 1.88 e8 &  1.90 e-3  & 3.96 &  2.05 e7 &  7.17 e-4  \\
        QC-NCSN++  & \textbf{3.38} & 3.49 e7 & 8.41 e-4 & \textbf{3.83} & 1.13 e7 & 5.47 e-4 \\
        \boldtoprule
        & \multicolumn{3}{c|}{CIFAR-100} & \multicolumn{3}{c}{SVHN} \\
        \thline
        Method   & NLL & $\asym$ & $\normasym$ & NLL & $\asym$& $\normasym$\\
        \thline
        C-NCSN++  & 5.34 & \textbf{0.00} &  \textbf{0.00}  & 5.00 & \textbf{0.00} &  \textbf{0.00}  \\
        U-NCSN++  & 3.50 &  2.98 e8 &  2.25 e-3   & 2.15 &  3.06 e7 &  6.54 e-4  \\
        QC-NCSN++ & \textbf{3.44} & 9.31 e7 & 1.44 e-3 & \textbf{2.01} & 1.69 e7 & 4.80 e-4 \\
        \boldtoprule
    \end{tabular}
    }
    \vspace{-1em}
    \label{tab:nll}
\end{table}
In this section, we examine the effectiveness of the proposed QCSBMs on four real-world datasets: CIFAR-10, CIFAR-100~\citep{krizhevsky2009learning}, ImageNet-32x32~\citep{van2016pixel}, and SVHN~\citep{Netzer2011} datasets. We employ the unconstrained architecture as well as the training procedure adopted by NCSN++~(VE)~\citep{song2021scorebased} as our baseline, and denote this method as `U-NCSN++' in our experiments. On the other hand, C-NCSN++ and QC-NCSN++, which are variants of U-NCSN++ constructed based on Eq.~(\ref{eq:conversion1}) and regularized by $\regestt$, are compared against U-NCSN++ using the NLL, $\asym$, $\normasym$, Fr\'echet Inception Distance (FID)~\citep{heusel2017gans}, and Inception Score (IS)~\citep{barratt2018note}, Precision, and Recall metrics. The details of the experimental setups are provided in Appendix~\ref{apx:configuration:datasets}.

\paragraph{Likelihood and Conservativeness Evaluation.} Table~\ref{tab:nll} reports the evaluation results of U-NCSN++, C-NCSN++, and QC-NCSN++ in terms of NLL, $\asym$, and $\normasym$ on the four real-world datasets. The evaluation results of C-NCSN++ is inferior to those of U-NCSN++ and QC-NCSN++ on the NLL metric, which aligns with our observation in Section~\ref{sec:motivational_example}, suggesting that the modeling flexibility is influential to the final performance on the NLL metric. In addition, we observe that the evaluation results on the NLL metric can be further improved when $\regestt$ is incorporated into the training process. As demonstrated in the table, QC-NCSN++, which achieves superior performance in terms of $\asym$ and $\normasym$ metrics, also has a noticeable improvement on the NLL metric.

\begin{table}[t]
    \renewcommand{\arraystretch}{0.885}
    \newcommand{\boldtoprule}{\midrule[0.9pt]}
    \newcommand{\thline}{\midrule[0.5pt]}
    \centering
    \vspace{0.5em}
    \caption{The sampling performance and NFE of C-NCSN++, U-NCSN++, and QC-NCSN++ with an ODE sampler. The arrow symbols $\uparrow$ / $\downarrow$ indicate that higher / lower values correspond to better performance, respectively. }
    \resizebox{\linewidth}{!}{%
    \begin{tabular}{c|c|cccc}
        \boldtoprule
        Method  & NFE ($\downarrow$) & FID ($\downarrow$) & IS ($\uparrow$) & Prec. ($\uparrow$) & Rec. ($\uparrow$)\\ 
        \boldtoprule
          \multicolumn{6}{c}{CIFAR-10} \\
        \thline
        C-NCSN++  & 343 & 16.66 & 7.96 & 0.57 & 0.60 \\
        U-NCSN++  & 170 & 7.48 & 9.24 & \textbf{0.61} & \textbf{0.62} \\
        QC-NCSN++ & \textbf{124} & \textbf{7.21} & \textbf{9.25} & \textbf{0.61} & \textbf{0.62} \\
        \boldtoprule
        \multicolumn{6}{c}{ImageNet-32x32} \\
        \thline 
        C-NCSN++  & 313 & 24.61 & 8.56 & 0.55 & 0.49 \\
        U-NCSN++  & 148 & 17.09 & 9.80 & 0.55 & \textbf{0.55} \\
        QC-NCSN++ & \textbf{115}& \textbf{16.62} & \textbf{9.85} & \textbf{0.56} & \textbf{0.55} \\
        \boldtoprule
        \multicolumn{6}{c}{CIFAR-100} \\
        \thline
        C-NCSN++ & 297 & 21.89 & 7.84 & 0.55 & 0.56  \\
        U-NCSN++ & 168 & 8.95 & 10.09 & \textbf{0.59} & 0.63  \\
        QC-NCSN++  & \textbf{131} & \textbf{8.90} & \textbf{10.12} & \textbf{0.59} & \textbf{0.64} \\
        \boldtoprule
        \multicolumn{6}{c}{SVHN} \\
        \thline
        C-NCSN++ & 498 & 24.78 & 2.76 & 0.55 & 0.48 \\
        U-NCSN++ &  209 & 16.08 & 3.17 & 0.56 & 0.63 \\
        QC-NCSN++ & \textbf{126}  & \textbf{15.15} & \textbf{3.24} & \textbf{0.59} & \textbf{0.65}  \\
        \boldtoprule
    \end{tabular}}
    \label{tab:fidis_pr_nfe}
    \vspace{-1em}
\end{table}
\begin{table}[t]
    \renewcommand{\arraystretch}{0.92}
    \newcommand{\boldtoprule}{\midrule[0.8pt]}
    \newcommand{\thline}{\midrule[0.3pt]}
    \centering
    \vspace{0.5em}
    \caption{The sampling performance and NFE of C-NCSN++, U-NCSN++, and QC-NCSN++ with the PC sampler. The arrow symbols $\uparrow$ / $\downarrow$ indicate that higher / lower values correspond to better performance, respectively.}
    \footnotesize
    \resizebox{\linewidth}{!}{
    \begin{tabular}{c|c|cccc}
        \boldtoprule
        Method   & NFE & FID ($\downarrow$) & IS ($\uparrow$) & Prec. ($\uparrow$) & Rec. ($\uparrow$) \\ 
        \boldtoprule
        \multicolumn{6}{c}{CIFAR-10} \\
        \thline
        C-NCSN++    & \multirow{3}{*}{1,000} & 10.97 & 8.58 & 0.61 & 0.58\\
        U-NCSN++    & & 2.50 & 9.58 & \textbf{0.67} & \textbf{0.60}  \\
        QC-NCSN++   & & \textbf{2.48} & \textbf{9.70} & \textbf{0.67} & \textbf{0.60} \\
         \boldtoprule
        \multicolumn{6}{c}{ImageNet-32x32} \\
        \thline
        C-NCSN++    & \multirow{3}{*}{1,000} & 28.97 & 8.58 & \textbf{0.61} & 0.45 \\
        U-NCSN++    & & 19.82 & 9.89 & 0.60 & \textbf{0.52}  \\
        QC-NCSN++   & & \textbf{19.62} & \textbf{9.94} & \textbf{0.61} &  \textbf{0.52} \\
        \boldtoprule
        \multicolumn{6}{c}{CIFAR-100} \\
        \thline
        C-NCSN++    & \multirow{3}{*}{1,000} & 17.59 & 8.38 & 0.60 & 0.54 \\
        U-NCSN++    & & 2.54 & 9.63 & 0.60 & 0.66  \\
        QC-NCSN++   & & \textbf{2.45} & \textbf{9.75} & \textbf{0.61} &  \textbf{0.67} \\
        \boldtoprule
        \multicolumn{6}{c}{SVHN} \\
        \thline
        C-NCSN++    & \multirow{3}{*}{1,000} & 24.71 & 2.66 & \textbf{0.61} & 0.46 \\
        U-NCSN++    & & 14.34 & 3.10 & 0.60 & \textbf{0.67}  \\
        QC-NCSN++   & &\textbf{13.88} & \textbf{3.12} & \textbf{0.61} &  \textbf{0.67} \\
        \boldtoprule
    \end{tabular}}
    \label{tab:pc}
    \vspace{-1em}
\end{table}

\paragraph{Sampling with an ODE Solver.} In this experiment, we examine the sampling performance of U-NCSN++, C-NCSN++, and QC-NCSN++ based on the number of function evaluations (NFE) and the FID/IS/Precision/Recall metrics. The sampler is implemented using the RK45~\citep{dormand1980family} ODE solver. Table~\ref{tab:fidis_pr_nfe} presents the evaluation results of the above setting. It is observed that C-NCSN++ is inferior to U-NCSN++ and QC-NCSN++, suggesting that modeling errors can be influential to the sampling performance. On the other hand, QC-NCSN++ performs comparably to U-NCSN++ in terms of the sampling performance metrics with fewer function evaluations, indicating that QC-NCSN++ is able to deliver a better sampling efficiency. 

\paragraph{Sampling under a Fixed NFE.} In this experiment, we compare the sampling performance of C-NCSN++, U-NCSN++, and QC-NCSN++ under a fixed NFE using the Predictor-Corrector (PC) sampler~\citep{song2021scorebased}. Different from the ODE sampler presented above, PC sampler discretizes the sampling process described Eq.~(\ref{eq:sde}) with equal-sized steps according to a predetermined value of NFE. Table~\ref{tab:pc} presents the evaluation results of C-NCSN++, U-NCSN++, and QC-NCSN++ when NFE is equal to 1,000. It is observed that QC-NCSN++ demonstrates improved performance in comparison to C-NCSN++ and U-NCSN++ in terms of the FID/IS/Precion/Recall metrics. The above experimental results validate the benefit of minimizing the non-conservativeness of a USBM.

\begin{figure*}[t]
    \centering
    \vspace{1em}
    \footnotesize
    \includegraphics[width=\linewidth]{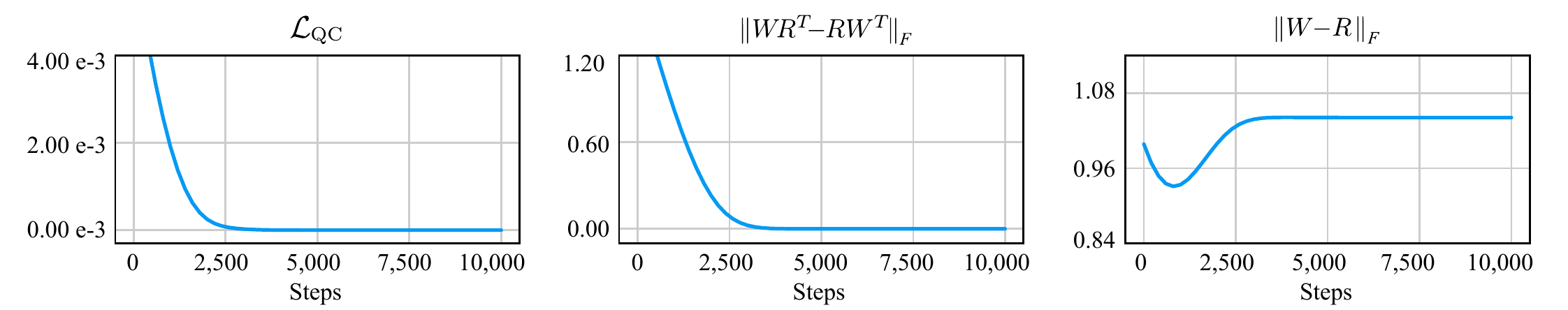}
    \vspace{-2em}
    \caption{The trends of $\forbnorm{\weightw\weightr^T-\weightr\weightw^T}$ and $\forbnorm{\weightw-\weightr}$ during the minimization process of $\reg$. The `steps' on the x-axes refer to the training steps.}
    \label{fig:sym_exp}
\end{figure*}
\begin{figure}
  \begin{center}
    \includegraphics[width=0.5\textwidth]{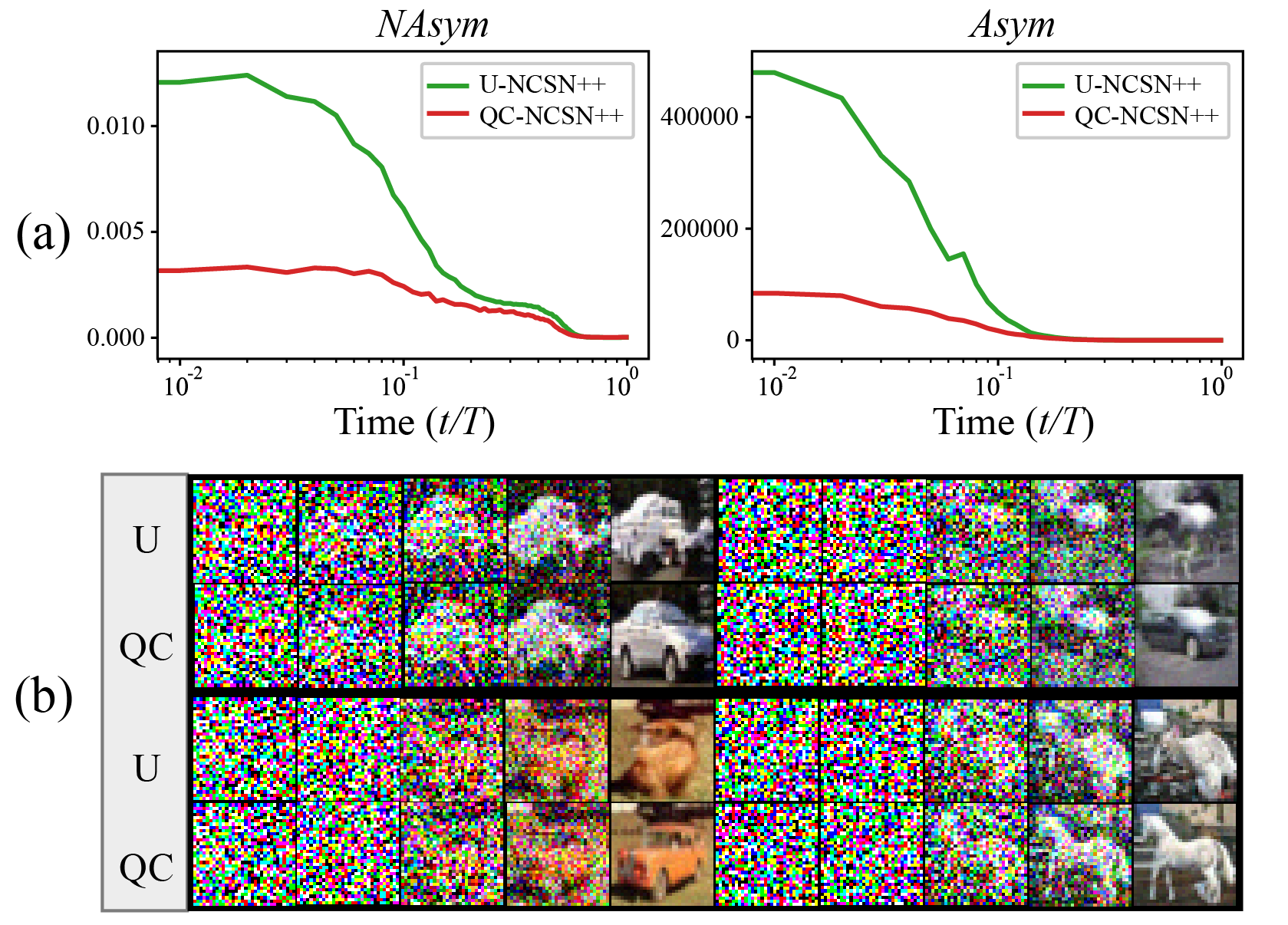}
  \end{center}
  \vspace{-1em}
  \caption{(a) The evaluation results of $\asym$ and $\normasym$ under different timestep $t$ on the CIFAR-10 dataset. (b) Examples generated by U-NCSN++ and QC-NCSN++ with the same random seed.}
  \label{fig:asym_demo}
  \vspace{-1.5em}
\end{figure}

\paragraph{The Effects of Non-Conservativeness during the Sampling Process.} To further investigate the influence of non-conservativeness during the sampling process, we measure $\asym$ and $\normasym$ on the $t$-axis, i.e., the non-conservativeness under different timesteps. As shown in Fig.~\ref{fig:asym_demo}~(a), QC-NCSN++ delivers lower $\asym$ and $\normasym$ under different $t$ in comparison to U-NCSN++. This result suggests that QC-NCSN++ can be less susceptible to its non-conservativeness during the sampling process. In our qualitative comparison presented in Fig.~\ref{fig:asym_demo}~(b), we observe that some examples generated using QC-NCSN++ have noticeably improved sample quality as compared to U-NCSN++.

\section{QCSBM Implemented as a One-Layered Autoencoder}
\label{sec:mechanism}
A line of research \citep{vincent2011connection, kamyshanska2013autoencoder, im2016conservativeness, Kamyshanska2015ThePE} focuses on a type of SBM constructed as a one-layered autoencoder, in which the property of conservativeness can be systematically analyzed. Such an SBM is represented as $\scoremodel(\vtx;\param)= \weightr h(\weightw^T\vtx+\vb)+\vc$, where $h(\cdot)$ is an activation function, $\vb, \vc\in \Rd$, $\weightr, \weightw\in \R^{\xdim\times H}$ are the weights of $\scoremodel(\cdot\ ;\param)$~(i.e., $\theta=\{\weightr,\weightw,\vb,\vc\}$), and $H$ is the width of the hidden-layer.
As proved in~\citep{im2016conservativeness}, the output vector field of $\scoremodel(\cdot\ ;\param)$ is conservative if and only if $\weightw\weightr^T=\weightr\weightw^T$. To ensure the conservativeness of such an SBM, a number of works~\citep{vincent2011connection, kamyshanska2013autoencoder, Kamyshanska2015ThePE} follow the concept of CSBMs and restrict the weights of $\scoremodel(\cdot\ ;\param)$ to be `tied,' i.e., $\weightw=\weightr$.
An SBM with tied weights, however, is only a sufficient condition for its conservativeness, rather than a necessary one. This implies that there must exist some conservative $\scoremodel(\cdot\ ;\param)$ that cannot be modeled using tied weights.

Instead of enforcing an SBM's weights to be tied (i.e., $\weightw=\weightr$), QCSBMs indirectly learn to satisfy the conservativeness  condition (i.e., $\weightw\weightr^T=\weightr\weightw^T$) through minimizing $\reg$. Fig.~\ref{fig:sym_exp} depicts the trends of $\forbnorm{\weightw\weightr^T-\weightr\weightw^T}$ and $\forbnorm{\weightw-\weightr}$ during the minimization process of $\reg$. 
As the training progresses, the values of $\forbnorm{\weightw\weightr^T-\weightr\weightw^T}$ approach zero, indicating that $\scoremodel(\cdot\ ;\param)$ learns to output a conservative vector field through minimizing $\reg$. In contrast, the values of $\forbnorm{\weightw-\weightr}$ do not decrease to zero, revealing that minimizing $\reg$ does not necessarily lead to $\weightw=\weightr$. The experimental results thus suggest that QCSBMs can learn to output conservative vector fields that cannot be modeled by one-layered autoencoders with tied weights. This justifies the advantage of QCSBMs over CSBMs, as QCSBMs provide a more flexible parameterization while still maintaining their conservativeness. In Appendix~\ref{apx:experiments:mechanism}, we offer more examples to support this observation.

\vspace{0.15em}
\section{Conclusion}
\label{sec:conclusion}
In this paper, we unveiled the underlying issues of CSBMs and USBMs, and highlighted the importance of preserving both of the architectural flexibility and the property of conservativeness through two motivational experiments. We proposed a new category of SBMs, named QCSBMs, in which the magnitudes of their rotation densities are minimized through a regularization loss for enhancing their property of conservativeness. We showed that such a regularization loss can be reformulated as a scalable variant based on the Hutchinson's trace estimator, and demonstrated that it can be efficiently incorporated into the training procedure of SBMs. Finally, we validated the effectiveness of QCSBMs through the experimental results on the real-world datasets, and showcased the advantage of QCSBMs over CSBMs using the example of a one-layered autoencoder.

\section*{Acknowledgements}
\label{sec:ack}
The authors gratefully acknowledge the support from the National Science and Technology Council (NSTC) in Taiwan under grant number MOST 111-2223-E-007-004-MY3, as well as the financial support from MediaTek Inc., Taiwan. The authors would also like to express their appreciation for the donation of the GPUs from NVIDIA Corporation and NVIDIA AI Technology Center (NVAITC) used in this work. Furthermore, the authors extend their gratitude to the National Center for High-Performance Computing (NCHC) for providing the necessary computational and storage resources.
\bibliography{citation}
\bibliographystyle{icml2023}

\newpage
\appendix
\onecolumn
\setcounter{section}{0}
\setcounter{equation}{0}
\setcounter{figure}{0}
\setcounter{table}{0}
\setcounter{algorithm}{0}

\renewcommand{\thealgorithm}{A\arabic{algorithm}}
\renewcommand{\thefigure}{A\arabic{figure}}
\renewcommand{\thetable}{A\arabic{table}}
\renewcommand{\theequation}{A\arabic{equation}}

\section{Appendix}
\label{apx}
In this Appendix, we first provide the definitions for the symbols used in the main manuscript and the Appendix in Section~\ref{apx:notation}. Next, we detail the backpropagation processes described in Section~\ref{sec:methodology:implementation}, and provide the derivation for (\ref{eq:reg_qc_trace}) in Section~\ref{apx:derivations}. Then, we offer a discussion on the optimization process of QCSBMs as well as the normalized asymmetry metric in Sections~\ref{apx:optimization}~and~\ref{apx:asymmetry}, respectively.  Subsequently, in Section~\ref{apx:sampling}, we describe the approach to extend a QCSBM to its time-inhomogeneous variant, i.e., QC-NCSN++ described in Section~\ref{sec:experimental_results} of the main manuscript. Finally, we provide the detailed experimental configurations in Section~\ref{apx:configuration}, and a number of qualitative and quantitative experimental results in Section~\ref{apx:experiments}.

\subsection{List of Notations}
\label{apx:notation}
In this section, we offer the list of notations used throughout the main manuscript and the Appendix. These notations and their descriptions are summarized in Table~\ref{tab:notation}.

\begin{table*}[h!]
\renewcommand{\arraystretch}{1.175}
\newcommand{\boldtoprule}{\toprule[1.2pt]}
\centering
\footnotesize
\begin{tabularx}{\textwidth}{l|X}
    \boldtoprule
    Symbol & Description  \\
    \boldtoprule
    $\datasetdim$         & the dataset size. \\
    $\xdim$               & the data dimension. \\
    $\estdim$             & the number of random vectors used in the Hutchinson's trace estimator. \\
    $\timedim$            & the number of discretized timesteps for the sampling algorithm. \\
    $\hiddendim$          & the hidden dimension of the one-layered autoencoder described in Section~\ref{sec:mechanism}. \\
    \hline
    $\alpha$              & the step size used in Langevin dynamics. \\
    $\epsilon$            & the score-matching error described in Section~\ref{sec:motivational_example:curl}. \\
    $\sigma$              & the standard deviation of a Gaussian distribution. \\
    $\param$              & the parameters of an SBM. \\
    \hline
    $\vx\in\Rd$           & a data sample. \\
    $\vttx\in\Rd$          & a perturbed data sample. \\
    $\vz\in\Rd$           & a noise vector used in Langevin dynamics. \\
    $\vv\in\Rd$           & a random vector used in the Hutchinson trace estimator. \\
    $\vb,\vc\in\Rd$                             & the bias of the one-layered autoencoder described in Section~\ref{sec:mechanism}. \\
    $\weightw,\weightr\in\R^{\xdim\times \hiddendim}$                            & the weights of the one-layered autoencoder described in Section~\ref{sec:mechanism}. \\
    \hline
    $Z(\param)$   & a partition function of a Boltzmann distribution. \\
    $\energymodel(\cdot\ ;\param):\Rd \to \R$  & an energy model parameterized by $\param$. \\
    $\scoremodel(\cdot\ ;\param):\Rd \to \Rd$   & a score model parameterized by $\param$. \\
    \hline
    $\gradop{\vtx}\energymodel(\vtx;\param)$  & the gradient of $\energymodel(\vtx;\param)$ w.r.t. $\vtx$. \\
    $\gradop{\vtx}\scoremodel(\vtx;\param)$     & the Jacobian matrix of $\scoremodel(\vtx;\param)$. \\
    $\jacob$                             & the simplified notation for $\gradop{\vtx}\scoremodel(\vtx;\param)$. \\
    \hline
    $\esm$ & Explicit Score Matching (ESM) loss defined in Eq.~(\ref{eq:esm}).\\
    $\ism$ & Implicit Score Matching loss (ISM) loss defined in Eq.~(\ref{eq:ism}). \\
    $\ssm$ & Sliced Score Matching loss (SSM) loss defined in Eq.~(\ref{eq:ssm}).\\
    $\dsm$ & Denoising Score Matching loss (DSM) loss defined in Eq.~(\ref{eq:dsm}). \\
    $\total$ & the total loss of QCSBMs defined in Eq.~(\ref{eq:loss_total}).\\
    $\reg$ & the proposed regularization loss defined in Eq.~(\ref{eq:reg_qc}).\\
    $\regtrace$ & the equivalent variant of $\reg$ defined in Eq.~(\ref{eq:reg_qc_trace}).\\
    $\regest$ & the approximated variant of $\regtrace$ defined in Eq.~(\ref{eq:reg_qc_est}).\\
    $\regestt$ & the variant of $\regest$ defined in Eq.~(\ref{eq:reg_qc_estt}).\\
    \hline
    $\vu^T\vv=\vu \cdot \vv=\sum_i\vu_i \vv_i $ & inner product between two vectors $\vu$, $\vv$. \\
    $\trace{A}=\sum_i A_{i,i}$ & trace of a matrix $A$. \\
    $\norm{\vu}=\sqrt{\sum_i\vu_i^2}$ & Euclidean norm of a vector $\vu$. \\
    $\forbnorm{A}=\sqrt{\sum_{i,j} A_{i,j}^2}$ & Frobenius norm of a matrix $A$. \\
    $\exponential{\cdot}$ & an exponential function. \\
    $\stopgradop{\cdot}$ & a stop gradient operator. \\
    \boldtoprule
\end{tabularx}
\vspace{-0.5em}
\caption{The list of symbols used in this paper.}
\label{tab:notation}
\end{table*}
\subsection{Derivations}
\label{apx:derivations}
\subsubsection{Conservativeness of a Score-based Model}
\label{apx:derivations:rot}
\begin{proposition}
\label{prop:asym_cons}
Given $p(\vx)>0, \forall \vx \in \Rd$, $\asym$ equals to 0 if and only if $\curlbop{\scoremodel(\vtx;\param)} = 0,\ \forall\ 1 \leq i,j \leq \xdim$.
\end{proposition}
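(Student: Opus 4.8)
The statement is an instance of the standard fact that the integral of a nonnegative function against a strictly positive density vanishes exactly when the function itself vanishes; the plan is to make this precise in both directions, taking care with the regularity needed to pass from an almost-everywhere statement to a genuinely pointwise one.

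\textbf{The easy direction.} First I would dispatch the ``if'' direction. If $\curlbop{\scoremodel(\vtx;\param)} = 0$ for every pair $1 \le i,j \le \xdim$ and every $\vtx \in \Rd$, then the integrand $\half \sum_{i,j=1}^\xdim (\curlop{\scoremodel(\vtx;\param)})^2$ appearing in the definition of $\asym$ in Eq.~(\ref{eq:asym}) is identically zero, so its expectation under $\psigma(\vtx)$ is zero. This direction needs no positivity assumption on $p$.

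\textbf{The main direction.} For the converse, suppose $\asym = 0$. The crucial observation is that the integrand
\begin{equation*}
q(\vtx) \triangleq \half \sum_{i,j=1}^\xdim \left( \curlop{\scoremodel(\vtx;\param)} \right)^2 = \half \forbnorm{\jacob - \jacob^T}^2 \ge 0
\end{equation*}
is a sum of squares, hence pointwise nonnegative (the second equality is the identity already used in Eq.~(\ref{eq:asym})). Since $\asym = \E_{\psigma(\vtx)}[q(\vtx)] = \int q(\vtx)\, p(\vtx)\, d\vtx = 0$ with $q(\vtx)p(\vtx) \ge 0$, a routine measure-theoretic argument gives $q(\vtx)\,p(\vtx) = 0$ for Lebesgue-almost-every $\vtx$. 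Invoking the hypothesis $p(\vtx) > 0$ for all $\vtx \in \Rd$, I can divide out the strictly positive factor $p(\vtx)$ to conclude that $q(\vtx) = 0$ almost everywhere.

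\textbf{Upgrading to everywhere.} The last step—and the only place demanding care—is to strengthen this almost-everywhere conclusion to a pointwise one, which is where the main obstacle lies. Because $\scoremodel(\cdot\,;\param)$ is continuously differentiable (an assumption already implicit whenever the Jacobian $\jacob$ is formed), each rotation density $\curlop{\scoremodel(\vtx;\param)}$ is continuous in $\vtx$, and hence so is $q$. A continuous nonnegative function that vanishes almost everywhere must vanish identically: were $q(\vtx_0) > 0$ at some point, continuity would force $q > 0$ on a neighborhood of $\vtx_0$, a set of positive Lebesgue measure, contradicting $q = 0$ a.e. Therefore $q(\vtx) = 0$ for every $\vtx$, and because $q$ is a sum of squares this forces each term $\curlbop{\scoremodel(\vtx;\param)} = 0$ for all $1 \le i,j \le \xdim$ and all $\vtx$, completing the proof. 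The measure-theoretic core is entirely routine once the nonnegativity of the integrand and the strict positivity of $p$ are in hand; the only genuine work is the regularity bookkeeping in this final paragraph.
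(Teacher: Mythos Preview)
Your proof is correct and follows essentially the same approach as the paper's, which presents a terse chain of equivalences invoking the positivity of $p$ at the key step. Your treatment is in fact more careful than the paper's in explicitly handling the almost-everywhere-to-everywhere upgrade via continuity of the Jacobian—a point the paper simply glosses over.
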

\begin{proof}
\begin{equation*}
\begin{aligned}
&\E_{\psigma(\vtx)}\Bigg[ \half\sum_{i,j=1}^\xdim \left(\curlop{\scoremodel(\vtx;\param)}\right)^2\Bigg] = 0 \\
\Leftrightarrow\ & \int_{\vx\in \Rd} \psigma(\vtx) \half\sum_{i,j=1}^\xdim \left(\curlop{\scoremodel(\vtx;\param)}\right)^2 d\vx = 0\\
\stackrel{(1)}{\Leftrightarrow}\  & \half\sum_{i,j=1}^\xdim \left(\curlop{\scoremodel(\vtx;\param)}\right)^2 = 0 \\
\Leftrightarrow\ & \curlbop{\scoremodel(\vtx;\param)} = 0,\ \forall\ 1 \leq i,j \leq \xdim,\\
\end{aligned}
\end{equation*}
where $(1)$ is due to the assumption of positiveness (i.e., $p(\vx)>0$). 
\end{proof}

\subsubsection{The Derivation of $\regtrace$ in Eq.~(\ref{eq:reg_qc_trace})}
\label{apx:derivations:reg_trace}
In Section~\ref{sec:methodology:regularization}, we derived the computationally efficient objective $\regestt$ based on the equality $\reg=\regtrace$. To show that the equivalence holds, we provide a formal derivation as follows.

\begin{proposition}
\label{prop:Lqc}
$\reg(\param)=\regtrace(\param)$.
\end{proposition}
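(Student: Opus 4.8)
The plan is to reduce the claimed equality to a single deterministic matrix identity that holds pointwise inside the expectation, namely
\begin{equation*}
\half \forbnorm{\jacob - \jacob^T}^2 = \trace{\jacob\jacob^T} - \trace{\jacob\jacob},
\end{equation*}
where $\jacob = \gradop{\vtx}\scoremodel(\vtx;\param)$. Once this is established for an arbitrary square matrix $\jacob$, applying $\E_{\psigma(\vtx)}[\cdot]$ to both sides reproduces the definitions of $\reg$ in Eq.~(\ref{eq:reg_qc}) and $\regtrace$ in Eq.~(\ref{eq:reg_qc_trace}) verbatim, which yields $\reg(\param) = \regtrace(\param)$.

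First I would rewrite the Frobenius norm as a trace: for any matrix $M$ one has $\forbnorm{M}^2 = \trace{M M^T}$. Setting $M = \jacob - \jacob^T$ and using the observation that $\jacob - \jacob^T$ is antisymmetric, i.e. $(\jacob - \jacob^T)^T = -(\jacob - \jacob^T)$, I would obtain $\forbnorm{\jacob - \jacob^T}^2 = \trace{(\jacob - \jacob^T)(\jacob^T - \jacob)}$. Expanding the product gives four terms, and taking the trace term by term produces $\trace{\jacob\jacob^T} - \trace{\jacob\jacob} - \trace{\jacob^T\jacob^T} + \trace{\jacob^T\jacob}$.

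Next I would simplify this using two elementary trace identities. Invariance of the trace under transposition gives $\trace{\jacob^T\jacob^T} = \trace{(\jacob\jacob)^T} = \trace{\jacob\jacob}$, and the cyclic property gives $\trace{\jacob^T\jacob} = \trace{\jacob\jacob^T}$. Substituting these collapses the four-term expansion into $2\trace{\jacob\jacob^T} - 2\trace{\jacob\jacob}$, and dividing by two yields exactly the pointwise identity stated above.

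I do not anticipate any genuine obstacle, since the result is a purely algebraic identity requiring no analytic input beyond the regularity already implicit in writing $\jacob$. The only point demanding care is the bookkeeping when matching the cross terms: one must correctly pair $\trace{\jacob^T\jacob}$ with $\trace{\jacob\jacob^T}$ via cyclicity and $\trace{\jacob^T\jacob^T}$ with $\trace{\jacob\jacob}$ via transpose invariance. Conflating these two mechanisms is the easiest way to drop a sign or lose a cross term, so the main thing is to keep the transpose and cyclic arguments distinct throughout the expansion.
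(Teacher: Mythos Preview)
Your proposal is correct and follows essentially the same route as the paper: both arguments rewrite $\forbnorm{\jacob-\jacob^T}^2$ as a trace, expand the product into four terms, and then collapse them using $\trace{\jacob^T\jacob}=\trace{\jacob\jacob^T}$ and $\trace{\jacob^T\jacob^T}=\trace{\jacob\jacob}$ to obtain $2\trace{\jacob\jacob^T}-2\trace{\jacob\jacob}$. The only cosmetic difference is that you isolate the pointwise matrix identity before taking the expectation, and you are more explicit about distinguishing the cyclic and transpose-invariance steps than the paper is.
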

\begin{proof}
\begin{equation*} 
\begin{aligned}
\reg(\param) &= \E_{\psigma(\vtx)} \left[ \half \forbnorm{\jacob - \jacob^T}^2 \right] \\
&\stackrel{(1)}{=} \E_{\psigma(\vtx)} \left[  \frac{1}{2}\trace{(\jacob - \jacob^T)^T (\jacob - \jacob^T)}\right] \\
&= \E_{\psigma(\vtx)} \left[  \frac{1}{2}\trace{(\jacob^T - \jacob) (\jacob - \jacob^T)}\right] \\
&= \E_{\psigma(\vtx)} \left[  \frac{1}{2}\trace{\jacob^T\jacob-\jacob^T\jacob^T+\jacob\jacob^T-\jacob\jacob}\right] \\
&= \E_{\psigma(\vtx)} \left[  \frac{1}{2}\left(\trace{\jacob^T\jacob}-\trace{\jacob^T\jacob^T}+\trace{\jacob\jacob^T}-\trace{\jacob\jacob}\right)\right] \\
&\stackrel{(2)}{=} \E_{\psigma(\vtx)} \left[  \trace{\jacob \jacob^T} - \trace{\jacob \jacob}\right] \\
&=\regtrace(\param),
\end{aligned}
\end{equation*}
where $(1)$ and $(2)$ are derived based on the properties of the trace operation, i.e., $\forbnorm{A}^2=\trace{A^TA}$ and $\trace{AB}=\trace{BA}$, respectively.
\end{proof}

\subsection{A Detailed Description of the Training Process of QCSBMs}
\label{apx:optimization}
The entire training procedure is divided into five steps, denoted as Steps (1)$\sim$(5), respectively. Steps (1)$\sim$(3) describe the forward propagation process of $\total(\param)$, which is depicted by the black arrows in Fig.~\ref{fig:bp}~(a). 
Steps (4) and (5) correspond to the backpropagation processes of the two gradient components comprising $\gradop{\param}\total(\param)$, which are depicted in Fig.~\ref{fig:bp}~(b). In the following paragraphs, we elaborate on the details of Steps (1)$\sim$(5).

\begin{figure}[t]
    \centering
    \includegraphics[width=0.9\linewidth]{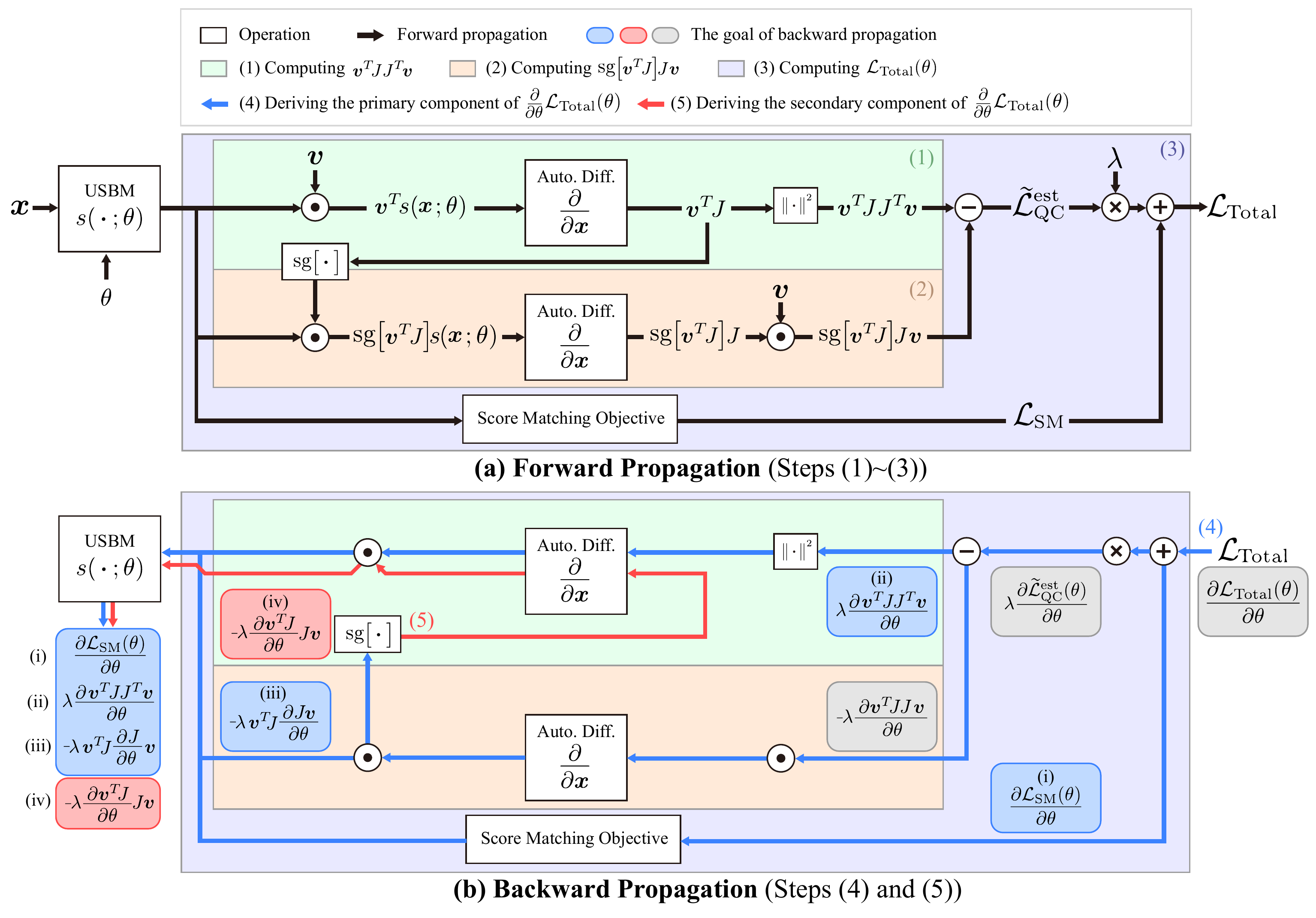}
    \vspace{-1em}
    \caption{The computational graphs of $\total$ in QCSBMs. The upper and lower subplots depict the forward and backward propagation processes, respectively. The `Auto. Diff.' blocks represent the operation of differentiating $\vu^T \scoremodel(\vtx;\param)$, where $\vu$ is a constant vector with respect to $\vtx$.}
    \label{fig:bp}
\end{figure}
\textbf{(1) Computing $\vv^T \jacob \jacob^T \vv$.}~First, $\vv^T \jacob$ is computed by performing backpropagation of $\vv^T \scoremodel(\vtx;\param)$ with respect to $\vtx$ via automatic differentiation, which is depicted as the upper `Auto. Diff.' block in Fig.~\ref{fig:bp}~(a). Then, $\vv^T \jacob \jacob^T \vv$ is calculated by taking the squared L2 norm on $\vv^T \jacob$ according to the relationship: $\norm{\vv^T \jacob}^2=\vv^T \jacob (\vv^T \jacob)^T=\vv^T \jacob \jacob^T \vv$.

\textbf{(2) Computing $\vv^T \jacob \jacob \vv$.}~$\stopgradop{\vv^T \jacob} \scoremodel(\vtx;\param)$ is first calculated by taking the inner product between $\stopgradop{\vv^T \jacob}$ and $\scoremodel(\vtx;\param)$, where the stop-gradient operator $\stopgradop{\cdot}$ is applied to $\vv^T \jacob$ to detach it from the computational graph built in Step (1). Then, $\stopgradop{\vv^T \jacob} \jacob$ is calculated by differentiating $\stopgradop{\vv^T\jacob}\scoremodel(\vtx;\param)$ via performing backpropagation. Stopping the gradient of $\vv^T\jacob$ is necessary to ensure that the automatic differentiation (i.e., the lower `Auto. Diff.' block in Fig.~\ref{fig:bp}~(a)) excludes the computational graph used for differentiating $\vv^T\jacob$, allowing $\vv^T\jacob \jacob$ to be correctly derived. Lastly, $\stopgradop{\vv^T \jacob} \jacob \vv$ is obtained by taking the inner product of $\stopgradop{\vv^T\jacob}\jacob$ and $\vv$.

\textbf{(3) Computing $\total(\param)$.}~Based on the results of Steps (1) and (2), $\regestt(\param)$ is computed by taking the expectation of ($\vv^T \jacob \jacob^T \vv -\stopgradop{\vv^T \jacob}\jacob\vv$). Meanwhile, the score matching loss $\sm(\param)$ can be derived using any one of the Eqs.~(\ref{eq:esm}),~(\ref{eq:ism}),~(\ref{eq:ssm}), and~(\ref{eq:dsm}). Finally, $\total(\param)$ is calculated by adding $\sm(\param)$ and $\lambda\regestt(\param)$, as described in Eq.~(\ref{eq:loss_total}).

\textbf{(4) Deriving the primary component of $\gradop{\param}\total(\param)$.}~Based on the computational graph built in Steps (1)$\sim$(3), the primary component of $\gradop{\param}\total(\param)$ is computed by performing backward propagation through the paths in the computational graph highlighted by the blue arrows in Fig.~\ref{fig:bp}~(b) using automatic differentiation. Note that these gradients are not equal to $\gradop{\param}\total(\param)$ due to the adoption of the stop-gradient operator $\stopgradop{\cdot}$ in Step (2). As a result, an additional secondary gradient component, which is derived in Step (5), is included to compensate it.

\textbf{(5) Deriving the secondary component of $\gradop{\param}\total(\param)$.}~The secondary component of $\gradop{\param}\total(\param)$ is derived by performing backward propagation through the paths in the computational graph highlighted by the red arrows in Fig.~\ref{fig:bp}~(b) using automatic differentiation. By accumulating the gradients of the primary and the secondary components, the gradients $\gradop{\param}\total(\param)$ can be correctly calculated.

\textbf{The Derivation of the primary and secondary components of $\gradop{\param} \total(\param)$.}~In Steps (4) and (5), we decompose $\gradop{\param}\total(\param)$ as the primary and secondary components, and separately derive them. To further elaborate on such a backward propagation process, we offer a detailed description in this subsection. For the sake of notational simplicity, we assume that both the batch size and the number of random vectors $\estdim$ are 1.

According to the rule of sum and the rule of product from vector calculus, the gradient of the total loss $\gradop{\param}\total(\param)$ can be decomposed as the sum of $\gradop{\param} \sm(\param)$, $\lambda \gradop{\param} \vv^T \jacob \jacob^T \vv$, $-\lambda (\gradop{\param} \vv^T\jacob) \jacob\vv$, and $-\lambda \vv^T\jacob (\gradop{\param} \jacob\vv)$, indexed as (i)$\sim$(iv) respectively. The derivation is shown as the following:
\begin{equation*} 
\label{eq:chain_rule}
\begin{aligned}
\grad{\total(\param)}{\param}&=\grad{(\sm(\param)+\lambda\regest(\param))}{\param}\\
&=\grad{\sm(\param)}{\param}+\lambda\grad{\regest(\param)}{\param}\\
&=\grad{\sm(\param)}{\param}+\lambda\grad{(\vv^T\jacob\jacob^T\vv-\vv^T\jacob\jacob\vv)}{\param}\\
&=\grad{\sm(\param)}{\param}+\lambda\grad{\vv^T\jacob\jacob^T\vv}{\param}-\lambda\grad{\vv^T\jacob\jacob\vv}{\param}\\
&=\underbrace{\grad{\sm(\param)}{\param}}_\text{(i)}+\underbrace{\lambda\grad{\vv^T\jacob\jacob^T\vv}{\param}}_\text{(ii)}+\underbrace{(-\lambda)\vv^T\jacob\grad{\jacob\vv}{\param}}_\text{(iii)}+\underbrace{(-\lambda)\grad{\vv^T\jacob}{\param}\jacob\vv}_\text{(iv)}.
\end{aligned}
\end{equation*}

We name the sum of (i)$\sim$(iii) the \textit{primary} component of $\gradop{\param}\total(\param)$, and the term (iv) the \textit{secondary} component of $\gradop{\param}\total(\param)$. Such a decomposition suggests that $\gradop{\param} \total(\param)$ can be separately computed based on the computational graph built in Steps (1)$\sim$(3) as shown in the upper subplot of Fig.~\ref{fig:bp}. For the primary component, the sum of (i)$\sim$(iii) is computed by performing backward propagation through the paths in the computational graph highlighted by the blue arrows in the lower subplot of Fig.~\ref{fig:bp} using automatic differentiation. For the secondary component, the term (iv) is calculated by performing backward propagation through the red arrows in the lower subplot of Fig.~\ref{fig:bp}. Through these two steps, $\gradop{\param}\total(\param)$ can be correctly derived.

\subsection{Normalized Asymmetry Metric}
\label{apx:asymmetry}
In this section, we elaborate on the formulation of the normalized asymmetry metric $\normasym$ and derive a computationally efficient implementation of it using the Hutchinson's trace estimator.

\textbf{Derivation of the $\normasym$ Metric.}~As described in~\citep{ANDRILLI20161}, any matrix $A$ can be uniquely decomposed into a symmetric matrix $A_\mathrm{sym}$ and a skew-symmetric matrix $A_\mathrm{skew}$ as follows:
\begin{equation} 
\label{eq:skew_matrix}
A=A_\mathrm{sym}+A_\mathrm{skew}=\frac{A+A^T}{2}+\frac{A-A^T}{2}.
\end{equation}
Based on Eq.~(\ref{eq:skew_matrix}), the Jacobian $\jacob$ of a USBM $\scoremodel(\cdot\,;\param)$ can be written as the sum of a symmetric matrix $\jacob_\mathrm{sym}=(\jacob+\jacob^T)/2$ and a skew-symmetric matrix $\jacob_\mathrm{skew}=(\jacob-\jacob^T)/2$. Under such a definition, the $\normasym$ metric introduced in Section~\ref{sec:motivational_example:curl} can be formulated as follows:
\begin{equation} 
\label{eq:asym_ratio}
\E_{\psigma(\vtx)}\left[\frac{\forbnorm{\jacob_\mathrm{skew}}^2}{\forbnorm{\jacob}^2}\right]
=\E_{\psigma(\vtx)}\left[\frac{\forbnorm{\half(\jacob - \jacob^T)}^2}{\forbnorm{\jacob}^2}\right]
=\E_{\psigma(\vtx)}\left[\frac{1}{4}\frac{\forbnorm{\jacob - \jacob^T}^2}{\forbnorm{\jacob}^2}\right].
\end{equation}
This metric measures the ratio of the squared Frobenius norm of the skew-symmetric matrix $\forbnorm{\jacob_\mathrm{skew}}^2$ to the squared Frobenius norm of the Jacobian matrix $\forbnorm{\jacob}^2$, and falls within the range $[0,1]$. $\normasym=1$ corresponds to the condition where $\jacob_\mathrm{skew}$ dominates $\jacob$, implying that $\jacob$ is skew-symmetric. On the contrary, $\normasym=0$ indicates that $\jacob$ only contains the symmetric component $\jacob_\mathrm{sym}$, suggesting that $\jacob$ is symmetric. Since the squared Frobenius norm of the skew-symmetric matrix can be written as the sum of the squared rotation densities of $\scoremodel(\vtx;\param)$, i.e., $\forbnorm{\jacob_\mathrm{skew}}^2=\forbnorm{(\jacob-\jacob^T)/2}^2=\frac{1}{4} \sum_{i,j=1}^\xdim (\gradop{\vtx_j} \scoremodel(\vtx;\param)_i - \gradop{\vtx_i} \scoremodel(\vtx;\param)_j)^2=\frac{1}{4} \sum_{i,j=1}^\xdim \left(\curlop{\scoremodel(\vtx;\param)}
\right)^2$, $\normasym$ can be adopted to measure the non-conservativeness of $\scoremodel(\cdot\,;\param)$, as mentioned in Section~\ref{sec:motivational_example:curl}. 

\textbf{An Efficient Implementation of $\normasym$.}~Since Eq.~(\ref{eq:asym_ratio}) involves the explicit calculation of the Jacobian matrix $\jacob$, evaluating the $\normasym$ metric for any single instance requires $\xdim$ times of backward propagations. This indicates that the evaluation cost could grow significantly when $\xdim$ becomes large. To reduce the evaluation cost, we utilize the Hutchinson's trace estimator to approximate the $\normasym$ metric based on the following derivation:
\begin{equation} 
\label{eq:asym_norm_est}
\begin{aligned}
\E_{\psigma(\vtx)}\left[\frac{1}{4}\frac{\forbnorm{\jacob - \jacob^T}^2}{\forbnorm{\jacob}^2}\right]&=\E_{\psigma(\vtx)}\left[\frac{1}{2}\frac{\trace{\jacob \jacob^T}-\trace{\jacob \jacob}}{\trace{\jacob \jacob^T}}\right] \\ 
&=\E_{\psigma(\vtx)}\left[\frac{1}{2}\frac{\E_{\pv(\vv)}[\vv^T\jacob \jacob^T \vv]-\E_{\pv(\vv)}[\vv^T\jacob \jacob\vv]}{\E_{\pv(\vv)}[\vv^T\jacob \jacob^T \vv]}\right] =\E_{\psigma(\vtx)\pv(\vv)}\left[\frac{1}{2}\frac{\vv^T\jacob \jacob^T \vv-\vv^T\jacob \jacob\vv}{\vv^T\jacob \jacob^T \vv}\right].
\end{aligned}
\end{equation}
The expectation $\E_{\pv(\vv)}[\cdot]$ can be approximated using $\estdim$ random vectors. In addition, the terms $\vv^T\jacob \jacob^T \vv$ and $\vv^T\jacob \jacob\vv$ in Eq.~(\ref{eq:asym_norm_est}) can be efficiently calculated based on Steps (1) and (2) described in Section~\ref{sec:methodology:implementation}. This suggests that the computational cost of evaluating $\normasym$ can be significantly reduced when $\estdim \ll \xdim$.
\subsection{Time-Inhomogeneous QCSBMs}
\label{apx:sampling}
In this section, we demonstrate how a QCSBM is converted to its time-inhomogeneous variant QC-NCSN++, which was described in Section~\ref{sec:experimental_results} of the main manuscript. We first explain the modifications made in the sampling process. Then, we elaborate on the corresponding adjustments in the score-matching objective and the regularization loss.

\textbf{Sampling Process.}~QC-NCSN++ adopts the variance exploding (VE) diffusion process identical to that employed in NCSN++~(VE)~\citep{song2021scorebased}, which is a time-inhomogeneous sampling algorithm. In this sampling algorithm, the SBM and the step size are respectively represented as $\scoremodel(\cdot \ ;\param,\sigma_t)$ and $\alpha_t=\gradop{t}\sigma_t^2$, where $\sigma_t$ is a time-dependent standard deviation. In C-NCSN++, U-NCSN++, and QC-NCSN++, $\sigma_t$ is set to $\sigma_\mathrm{min}(\sigma_\mathrm{min}/\sigma_\mathrm{max})^{\frac{t}{T}}$~\citep{song2021scorebased}, where $T$ is the total number of timesteps in the sampling process, $\sigma_\mathrm{min}$ is a constant representing a minimal noise scale, and $\sigma_\mathrm{max}$ is a constant denoting a maximal noise scale.

\textbf{Training Objectives.}~Since the above time-inhomogeneous sampling process requires the SBM $\scoremodel(\cdot \ ;\param,\sigma_t)$ to be conditioned on a time-dependent standard deviation $\sigma_t$, the training objectives of $\scoremodel(\cdot \ ;\param,\sigma_t)$ have to be modified accordingly. For example, the score-matching objective $\dsm$ used in C-NCSN++, U-NCSN++, and QC-NCSN++ is modified as follows:
\begin{equation} 
\label{eq:dsm_noise}
\E_{\mathcal{U}(t)}  \left[ \lambda(t) \E_{p_{\sigma_t}(\vttx|\vx)\pzero(\vx)} \left[\norm{\scoremodel(\vttx;\param,\sigma_t) - \grad{\log p_{\sigma_t}(\vttx|\vx)}{\vttx}}^2 \right]\right],
\end{equation}

where $\mathcal{U}(t)$ is a uniform distribution defined on the interval $[0,T]$, and $\lambda(t)$ is a time-dependent coefficient for balancing the loss functions of different $t$. Meanwhile, the regularization term $\regestt$ used in QC-NCSN++ is adjusted according to $\lambda(t)$, which is formulated as follows:
\begin{equation} 
\label{eq:reg_qc_est_noise}
\begin{aligned}
\E_{\mathcal{U}(t)}  \left[ \lambda(t) \E_{p_{\sigma_t}(\vttx|\vx)\pzero(\vx)} \left[ \E_{\pv(\vv)} \left[\vv^T \jacob \jacob^T \vv -\vv^T \jacob \jacob \vv \right]\right] \right],
\end{aligned}
\end{equation}
where $\jacob = \gradop{\vttx} \scoremodel(\vttx;\param,\sigma_t)$.
\subsection{Experimental Setups}
\label{apx:configuration}
In this section, we elaborate on the experimental configurations and provide the detailed hyperparameter setups for the experiments presented in Sections~\ref{sec:motivational_example} and \ref{sec:experimental_results} of the main manuscript. The code implementation for the experiments is provided in the following repository: \url{https://github.com/chen-hao-chao/qcsbm}.

\subsubsection{Experimental Setups for Section~\ref{sec:motivational_example:curl}}
\label{apx:configuration:curl}
In Section~\ref{sec:motivational_example:curl}, we compare the sampling efficiency of a USBM and a CSBM with an approximation error $\epsilon$. These SBMs are formulated based on the following equation:
\begin{equation} 
\label{eq:scoremodel_epsilon}
\scoremodel(\vtx) =\frac{\partial}{\partial \vtx}\log \psigma(\vtx) +\sqrt{\frac{2\epsilon \mu(\vtx)}{\norm{\vtx}^2\psigma(\vtx)}}\vu(\vtx),
\end{equation}

where $\psigma$ is the target distribution, $\mu$ is an arbitrary distribution, and $\vu(\vtx)\in \Rd$ is a vector function with its norm equal to the norm of its input (i.e., $\|\vu(\vtx)\|=\|\vtx\|$). To show that the SBM $\scoremodel(\cdot)$ in Eq.~(\ref{eq:scoremodel_epsilon}) satisfies $\esm= \epsilon$, we provide the following proposition.

\begin{proposition}
Given $\epsilon>0$, a target distribution $\psigma$, and an arbitrary pdf $\mu$, $\scoremodel$ defined in Eq.~(\ref{eq:scoremodel_epsilon}) satisfies $\esm=\epsilon$.
\end{proposition}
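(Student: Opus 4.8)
The plan is to directly unwind the definition of $\esm$ in Eq.~(\ref{eq:esm}) and exploit the two structural properties built into Eq.~(\ref{eq:scoremodel_epsilon}): the prescribed form of the residual between $\scoremodel$ and the true score, together with the norm constraint $\norm{\vu(\vtx)}=\norm{\vtx}$. The key observation is that the second summand in Eq.~(\ref{eq:scoremodel_epsilon}) is, by construction, exactly the deviation $\scoremodel(\vtx)-\gradop{\vtx}\log\psigma(\vtx)$, so the squared-norm integrand in the ESM objective collapses to a single term and no cross-terms arise.

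Concretely, I would proceed in three cancellations. First, substituting the residual into Eq.~(\ref{eq:esm}) gives
$$\esm = \E_{\psigma(\vtx)}\left[\half\,\frac{2\epsilon\,\mu(\vtx)}{\norm{\vtx}^2\psigma(\vtx)}\norm{\vu(\vtx)}^2\right],$$
where the factor $\half$ cancels the $2$ in the numerator. Second, I would invoke the norm condition in the form $\norm{\vu(\vtx)}^2=\norm{\vtx}^2$, which cancels the $\norm{\vtx}^2$ in the denominator and reduces the integrand to $\epsilon\,\mu(\vtx)/\psigma(\vtx)$. Third, writing the expectation as an integral against $\psigma$, the weight $\psigma(\vtx)$ cancels the remaining $1/\psigma(\vtx)$, so that
$$\esm = \epsilon\int_{\vtx\in\Rd}\mu(\vtx)\,d\vtx = \epsilon,$$
the last equality holding because $\mu$ is a probability density and hence integrates to one.

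I do not expect a genuine obstacle, since Eq.~(\ref{eq:scoremodel_epsilon}) is reverse-engineered precisely so that every factor cancels in sequence. The only points deserving care are the well-definedness of the square root, which requires nonnegativity of $\mu$ and $\psigma$ together with strict positivity of $\psigma$ and $\norm{\vtx}$ on the relevant support, and the normalization $\int_{\vtx\in\Rd}\mu(\vtx)\,d\vtx=1$ that delivers the final value $\epsilon$ rather than some other multiple of it.
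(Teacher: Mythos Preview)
Your proposal is correct and mirrors the paper's own proof essentially step for step: substitute the definition of $\scoremodel$ so that the residual equals the second summand, pull the scalar out of the squared norm, use $\norm{\vu(\vtx)}^2=\norm{\vtx}^2$ to cancel $\norm{\vtx}^2$, cancel $\psigma(\vtx)$ against the integration weight, and conclude via $\int\mu(\vtx)\,d\vtx=1$. The only cosmetic difference is that the paper writes the expectation as an integral from the outset, whereas you convert at the end.
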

\begin{proof}
\begin{equation*} 
\begin{aligned}
\esm &=\int_{\vtx} \psigma(\vtx)\half \norm{\scoremodel(\vtx)-\gradop{\vtx}\log \psigma(\vtx)}^2 d\vtx\\
     &=\int_{\vtx} \psigma(\vtx) \half \norm{\frac{\partial}{\partial \vtx} \log \psigma(\vtx) +\sqrt{\frac{2\epsilon \mu(\vtx)}{\norm{\vtx}^2\psigma(\vtx)}}\vu(\vtx)-\gradop{\vtx}\log \psigma(\vtx)}^2 d\vtx\\
     &=\int_{\vtx} \psigma(\vtx) \half \norm{\sqrt{\frac{2\epsilon \mu(\vtx)}{\norm{\vtx}^2\psigma(\vtx)}}\vu(\vtx)}^2 d\vtx =\int_{\vtx} \psigma(\vtx) \half \frac{2\epsilon \mu(\vtx)}{\norm{\vtx}^2\psigma(\vtx)} \norm{\vu(\vtx)}^2 d\vtx \\
     & = \int_{\vtx} \mu(\vtx) \frac{\epsilon \norm{\vu(\vtx)}^2}{\norm{\vtx}^2}  d\vtx=\int_{\vtx} \mu(\vtx)\epsilon d\vtx=\epsilon\\
\end{aligned}
\end{equation*}
\end{proof}
In the motivational example presented in Section~\ref{sec:motivational_example:curl}, we choose $\psigma=\N(0;\sigma^2I)$, and select $\mu=\frac{1}{10}\sum_{i=1}^{10} \N([3\cos(\frac{2i\pi}{10}),3\sin(\frac{2i\pi}{10})]^T;I)$. We consider $\vu(\vtx)=[-\tx_2, \tx_1]^T$ for $\scoremodel_{\mathrm{U}}$, and $\vu(\vtx)=[\tx_1, \tx_2]^T$ for $\scoremodel_{\mathrm{C}}$, where $\vtx=[\tx_1,\tx_2]^T$. In particular, $[-\tx_2, \tx_1]^T$ is a rotational vector field with each vector tangent to the true score function $\frac{\partial}{\partial \vtx}\log \psigma(\vtx)=-1/\sigma^2 [\tx_1, \tx_2]^T$. On the other hand, $[\tx_1, \tx_2]^T$ is a vector field with each vector pointing to the opposite direction against the true score function. For an illustrative purpose, we leverage a deterministic variant of Eq.~(\ref{eq:langevin}) (i.e., $\vtx_{t+1}=\vtx_t+\alpha_t\scoremodel(\vtx_t)$) as our sampler to generate samples based on $\scoremodel_{\mathrm{U}}$ and $\scoremodel_{\mathrm{C}}$, and calculate the steps required for all samples to move to the center of $\psigma$.

\subsubsection{Experimental Setups for the Motivational Examples}
\label{apx:configuration:motivational_example}
\begin{wrapfigure}{r}{0.4\linewidth}
\vspace{-1.4em}
        \centering
    \includegraphics[width=\linewidth]{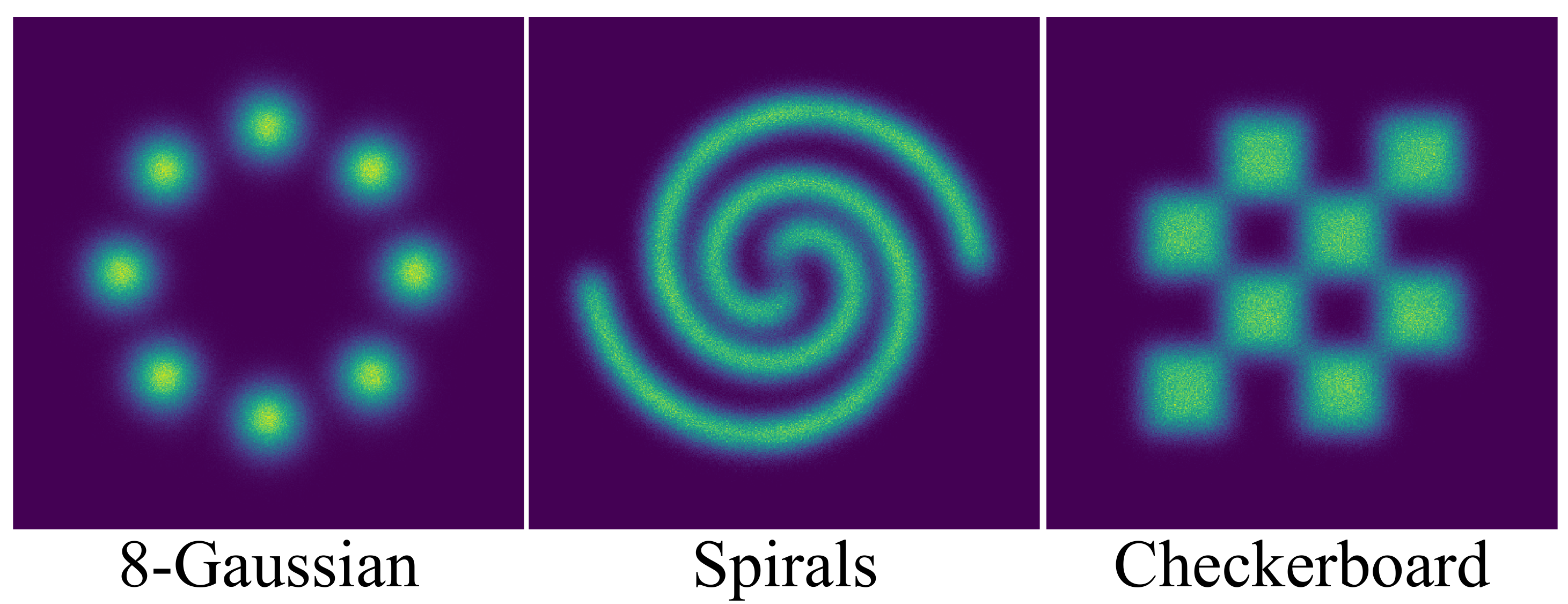}
    \vspace{-2em}
    \caption{The visualizations of the 8-Gaussian, Spirals, and Checkerboard datasets.}
    \label{fig:toy_datasets}
        \vspace{-1em}
\end{wrapfigure} 
\textbf{Datasets.}~The motivational experiments in Section~\ref{sec:motivational_example:flexibility} are performed on the 8-Gaussian, Spirals, and Checkerboard datasets as shown in Fig.~\ref{fig:toy_datasets}~(a). The data points of the 8-Gaussian dataset are sampled from eight separate Gaussian distributions centered at $(\cos(\frac{\pi w}{4}), \sin(\frac{\pi w}{4}))$, where $w\in \{1,...,8\}$. The data points of the Spirals dataset are sampled from two separate curves $(-\pi\sqrt{w}\cos(\pi\sqrt{w}), \pi\sqrt{w}\sin(\pi\sqrt{w}))$ and $(\pi\sqrt{w}\cos(\pi\sqrt{w}), -\pi\sqrt{w}\sin(\pi\sqrt{w}))$, where $w\in [0,1]$. Lastly, the data points of the Checkerboard dataset are sampled from $(4w-2,t-2s+\lfloor 4w-2\rfloor \,\mathrm{mod}\,2)$, where $w\in[0,1]$, $t\in[0,1]$, $s\in\{0,1\}$, $\lfloor\cdot\rfloor$ is a floor function, and $\mathrm{mod}$ is the modulo operation.

\textbf{Training and Implementation Details.}~The network architecture of $f$ is a three-layered multilayer perceptron (MLP) with $(64,128,256)$ neurons and Swish~\citep{ramachandran2017searching} as its activation function. This model architecture is similar to that used in the two-dimensional experiments of \citep{chao2022denoising}. The SBMs $\scoremodel_\mathrm{U}$ and $\scoremodel_\mathrm{C}$ are trained using the Adam optimizer~\citep{kingma2014adam} with a learning rate of $7.5\times 10^{-4}$ and a batch size of $5,000$. The balancing factor $\lambda$ is fixed to $0.1$. The maximal and minimal noise scales $\sigma_\mathrm{max}$ and $\sigma_\mathrm{min}$ are set to $3$ and $0.1$, respectively.

\textbf{Evaluation Method.}~The asymmetry ($\asym$) and normalized asymmetry ($\normasym$) metrics are evaluated on $\timedim$ discretized timesteps according to the following equations:
\begin{equation} 
\label{eq:asym_toy}
\frac{1}{\timedim\datasetdim}\sum_{t=1}^{\timedim} \sum_{m=1}^{\datasetdim} \trace{ \jacob \jacob^T}  - \trace{\jacob \jacob},
\end{equation}
\begin{equation} 
\label{eq:asym_norm_toy}
\frac{1}{\timedim\datasetdim}\sum_{t=1}^{\timedim} \sum_{m=1}^{\datasetdim} \frac{ \trace{\jacob \jacob^T} -\trace{\jacob \jacob}}{\trace{\jacob \jacob^T}},
\end{equation}
where $\jacob = \gradop{\tilde{\rvx}^{(t)}_m} \scoremodel(\tilde{\rvx}^{(t)}_m;\param,\sigma_t)$, $\{\tilde{\rvx}^{(t)}_m\}_{m=1}^\datasetdim$ represents a set of testing data points, and $\datasetdim$ denotes the size of the testing set. On the other hand, the score error is evaluated based on the following formula:
\begin{equation} 
\label{eq:score_err}
\frac{1}{\timedim\datasetdim}\sum_{t=1}^{\timedim} \sum_{m=1}^{\datasetdim} \norm{\scoremodel(\tilde{\rvx}^{(t)}_m;\param,\sigma_t) - \grad{\log p_{\sigma_t}(\tilde{\rvx}^{(t)}_m)}{\tilde{\rvx}^{(t)}_m}}^2,
\end{equation}
where $p_{\sigma_t}(\vttx)=\int p_{\sigma_t}(\vttx|\vx) p_0(\vx)d \vx$ and its closed form is derived according to \citep{chao2022denoising}. In our implementation, $\timedim$ and $\datasetdim$ are set as $10$ and $5,000$, respectively.

\vspace{1em}
\subsubsection{Experimental Setups for the Evaluations on the Real-World Datasets}
\label{apx:configuration:datasets}

\textbf{Datasets.}~The experiments presented in Section~\ref{sec:experimental_results} are performed on the CIFAR-10, CIFAR-100~\citep{krizhevsky2009learning}, ImageNet-32x32~\citep{van2016pixel}, and SVHN~\citep{Netzer2011} datasets. The training and test sets of Cifar-10 and Cifar-100 contain 50,000 and 10,000 images, respectively. The training and test sets of SVHN contain 73,257 and 26,032 images, respectively. On the other hand, the training and the test sets of ImageNet-32x32 consist of 1,281,149 and 49,999 images, respectively. 

\textbf{Training and Implementation Details.}~C-NCSN++, U-NCSN++, and QC-NCSN++ are implemented using the \texttt{Pytorch} framework. C-NCSN++, U-NCSN++, and QC-NCSN++ are trained using the Adam optimizer with a learning rate of $2\times10^{-4}$. The training procedure of U-NCSN++ requires 600,000 iterations for convergence for the CIFAR-10, CIFAR-100, and ImageNet-32x32 datasets, while it requires 300,000 iterations for convergence for the SVHN dataset. On the other hand, the optimization of C-NCSN++ is terminated at the early stage of the training process, since we observed that the sampling process using ODE sampler of C-NCSN++ optimized according to the aforementioned training length fails to converge. To address this issue, in our experiments, the training iterations of C-NCSN++ are reduced to 450,000 for the CIFAR-10 and ImageNet-32x32 datasets, 300,000 for the CIFAR-100 dataset, and 100,000 for the SVHN dataset. The training procedure of QC-NCSN++ consists of two stages. In the first stage, QC-NCSN++ is optimized in the same manner as U-NCSN++. In the second stage, the regularization term $\regestt$ is incorporated during the training process, which requires additional 150,000 iterations for convergence. In the training process, the batch size $b$ is fixed to 128 for C-NCSN++ and U-NCSN++, while its value is adjusted according to $\estdim$ for QC-NCSN++ for conserving the memory consumption. We adopt $(b,\estdim)=(8,16)$ in Table~\ref{tab:nll}, and $(b,\estdim)=(128,1)$ in Tables~\ref{tab:fidis_pr_nfe} and \ref{tab:pc} for QC-NCSN++, respectively. The maximal and minimal noise scales $\sigma_\mathrm{max}$ and $\sigma_\mathrm{min}$ are set to 50 and 0.01, respectively. The balancing factor $\lambda$ is set to 0.0001. The ODE sampler is implemented using the \texttt{scipy.integrate.solve\_ivp} library. 

\textbf{Evaluation Method.}~The asymmetry ($\asym$) and normalized asymmetry ($\normasym$) metrics are evaluated using Eqs.~(\ref{eq:asym_realworld}) and (\ref{eq:asym_norm_realworld}), respectively. They are formulated as follows:
\begin{equation} 
\label{eq:asym_realworld}
\frac{1}{\timedim\datasetdim \estdim}\sum_{t=1}^{\timedim} \sum_{m=1}^{\datasetdim} \sum_{k=1}^{\estdim} \rvv_k^T \jacob \jacob^T \rvv_k -\rvv_k^T \jacob \jacob \rvv_k,
\end{equation}
\begin{equation} 
\label{eq:asym_norm_realworld}
\frac{1}{\timedim\datasetdim \estdim}\sum_{t=1}^{\timedim} \sum_{m=1}^{\datasetdim} \sum_{k=1}^{\estdim} \frac{\rvv_k^T \jacob \jacob^T \rvv_k -\rvv_k^T \jacob \jacob \rvv_k}{\rvv_k^T \jacob \jacob^T \rvv_k},
\end{equation}
where $\jacob = \gradop{\tilde{\rvx}^{(t)}_m} \scoremodel(\tilde{\rvx}^{(t)}_m;\param,\sigma_t)$, $\{\tilde{\rvx}^{(t)}_m\}_{m=1}^\datasetdim$ represents a set of testing data points, and $\{\rvv_k\}_{k=1}^\estdim$ is a set of i.i.d. samples drawn from $\pv$. In our implementation, $\timedim$ and $\estdim$ are set as $100$ and $1$, respectively. The metrics for sampling performance (i.e., FID, IS, Precision and Recall) are evaluated using the \texttt{tensorflow\_gan} library as well as the official evaluation package implemented by~\citep{Kynknniemi2019ImprovedPA, ferjad2020icml}. 

\begin{wraptable}{r}{0.45\linewidth}
\renewcommand{\arraystretch}{0.9}
    \newcommand{\boldtoprule}{\midrule[1.1pt]}
    \newcommand{\thline}{\midrule[0.3pt]}
    \centering
    \vspace{-2.2em}
    \caption{The confidence interval of the evaluation results on the CIFAR-10 dataset.}
    \vspace{0.2em}
    \resizebox{\linewidth}{!}{%
    \begin{tabular}{ccccc}
        \boldtoprule
        NLL& FID & IS & Precision & Recall \\
        \thline
        $\pm$0.0014 & $\pm$0.0143 & $\pm$0.0065 & $\pm$0.0024 & $\pm$0.0011\\
        \boldtoprule
    \end{tabular}
    }
    \vspace{-1.5em}
    \label{tab:ci}
\end{wraptable} 
\textbf{Confidence Intervals of the Evaluation Results.}~Table~\ref{tab:ci} shows the 95\% confidence intervals for the evaluation results of QC-NCSN++ in terms of the NLL, FID, IS, Precision, and Recall metrics on the CIFAR-10 dataset. For the evaluation results of the FID, IS, Precision, and Recall metrics, the PC sampler with NFE=1,000 is adopted. All of these results are obtained by three times of evaluations.

\begin{wraptable}{r}{0.4\linewidth}
\renewcommand{\arraystretch}{0.9}
    \newcommand{\boldtoprule}{\midrule[1.1pt]}
    \newcommand{\thline}{\midrule[0.3pt]}
    \centering
    \vspace{-1em}
    \caption{A comparison between the results reported in~\citep{Xu2022PoissonFG} and those reproduced by us for U-NCSN++.}
    \vspace{0.2em}
    \resizebox{\linewidth}{!}{
    \begin{tabular}{ccccc}
        \boldtoprule
        & FID & IS & NFE \\
        \thline
        U-NCSN++~\citep{Xu2022PoissonFG} & 7.66 & 9.17 & 194 \\
        U-NCSN++ (Ours) & 7.48 & 9.24 & 170 \\
        \thline
        QC-NCSN++ (Ours) & \textbf{7.21} & \textbf{9.25} & \textbf{124} \\
        \boldtoprule
    \end{tabular}
    }
    \vspace{-1em}
    \label{tab:reproduce}
\end{wraptable} 
\textbf{Sampling performance of U-NCSN++.}~Table~\ref{tab:reproduce} compares the sampling performance of the baseline method (i.e., U-NCSN++) reported in~\citep{Xu2022PoissonFG} and that reproduced by us on the CIFAR-10 dataset using an ODE sampler. It is observed that the reproduced results are improved in terms of the FID, IS, and NFE metrics. This reinforces our statement in Section~\ref{sec:experimental_results}, as QC-NCSN++ is able to achieve superior results to both the reproduced and reported performance of U-NCSN++.

\subsection{Additional Experimental Results}
\label{apx:experiments}
In this section, we provide a number of additional experimental results. In Section~\ref{apx:experiments:mechanism}, we present additional experimental results of QCSBMs implemented as one-layered autoencoders to support our observation presented in Section~\ref{sec:mechanism} of the main manuscript. In Section~\ref{apx:experiments:time}, we provide a comparison between C-NCSN++, U-NCSN++, and QC-NCSN++ in terms of their time and memory consumption for each training and sampling iteration. In Section~\ref{apx:experiments:lambda}, we demonstrate the impact of the choices of $\lambda$ on the performance of QC-NCSN++. Finally, in Section~\ref{apx:experiments:uncurated_examples}, we provide additional qualitative results on the real-world datasets.

\subsubsection{QCSBMs implemented as One-Layered Autoencoders}
\label{apx:experiments:mechanism}
In Section~\ref{sec:mechanism}, we leveraged the example of an one-layered autoencoder $\scoremodel(\vtx;\param)=\weightr h(\weightw^T\vtx+\vb)+\vc$ to demonstrate the advantage of QCSBMs over CSBMs. Our experimental results in Fig.~\ref{fig:sym_exp} reveals that QCSBMs can learn to output conservative vector fields, which cannot be captured by CSBMs with tied weights (i.e., $\weightr=\weightw$). To further solidify our empirical observation, we provide additional examples in Fig.~\ref{fig:sym_exp+}. Fig.~\ref{fig:sym_exp+} depicts the trends of $\forbnorm{\weightw\weightr^T-\weightr\weightw^T}$ and $\forbnorm{\weightw-\weightr}$ during the minimization process of $\reg$ with four different seeds. As the training progresses, $\reg$ and $\forbnorm{\weightw\weightr^T-\weightr\weightw^T}$ both approach to zero in all of these four examples. In contrast, the values of $\forbnorm{\weightw-\weightr}$ do not approach to zero, and the trends of $\forbnorm{\weightw-\weightr}$ for these four examples differ. The above experimental evidences demonstrate that QCSBMs can learn to output conservative vector fields with $\weightr\neq\weightw$, and thus justify the advantage of QCSBMs over CSBMs.

\begin{table}[t]
    \renewcommand{\arraystretch}{1.02}
    \newcommand{\boldtoprule}{\midrule[0.8pt]}
    \newcommand{\thline}{\midrule[0.3pt]}
    \centering
    \caption{The time and memory consumption of evaluating the score function, the objective function, and the gradient of the objective function of C-NCSN++, U-NCSN++, and QC-NCSN++.}
    \vspace{0.5em}
    \footnotesize
    \resizebox{\linewidth}{!}{
    \begin{tabular}{ccc||cr|cr|cr}
        \boldtoprule
        & & & \multicolumn{2}{c|}{Evaluating $\loss(\param)$} & \multicolumn{2}{c|}{Evaluating $\frac{\partial}{\partial \param}\loss(\param)$} & \multicolumn{2}{c}{Evaluating $\scoremodel(\vx;\param)$} \\
        \thline
        Method & Model ($\scoremodel(\cdot\,;\param)$) & Objective ($\loss(\param)$) & Time & Memory & Time & Memory & Time & Memory \\
        \thline
        C-NCSN++  & Conservative NCSN++ & $\sm$    & 0.26 s & 17.2 GB & 1.21 s & 19.3 GB & 0.25 s & 17.2 GB \\
        U-NCSN++  & NCSN++              & $\sm$    & 0.10 s & 8.5 GB  & 0.30 s & 8.9  GB & 0.10 s & 5.5  GB \\
        QC-NCSN++ & NCSN++              & $\total$ & 0.36 s & 27.6 GB & 2.82 s & 32.0 GB & 0.10 s & 5.5  GB \\
        \boldtoprule
    \end{tabular}
    }
    \label{tab:time_mem}
\end{table}

\subsubsection{A Comparison on the Time and Memory Consumption}
\label{apx:experiments:time}
In this section, we investigate the time and memory consumption of evaluating the score function, the objective function, and the gradient of the objective function of C-NCSN++, U-NCSN++, and QC-NCSN++. The gradient operations with respect to both of the input $\vx$ and the parameters $\param$ are implemented using the automatic differentiation tool~\citep{Griewank2000EvaluatingD} provided in the \texttt{Pytorch} library~\citep{Paszke2019PyTorchAI}. The results are evaluated on a single NVIDIA V100 GPU with $32$ GB memory, and the batch size is fixed at $32$. Table~\ref{tab:time_mem} reports the evaluation results of the above setting. It is observed that the training time and memory requirements of C-NCSN++ and QC-NCSN++ are higher than U-NCSN++ as the calculation of the objective $\loss(\param)$ and its gradients $\frac{\partial}{\partial \param}\loss(\param)$ requires additional backward propagation. On the other hand, the sampling time and memory requirements of U-NCSN++ and QC-NCSN++ are lower than C-NCSN++, since the gradient operation in evaluating $s(\cdot\,;\param)$ of C-NCSN++ is prevented.

\vspace{2em}
\begin{wraptable}{r}{0.45\linewidth}
\renewcommand{\arraystretch}{0.9}
    \newcommand{\boldtoprule}{\midrule[1.1pt]}
    \newcommand{\thline}{\midrule[0.3pt]}
    \centering
    \vspace{-2em}
    \caption{The evaluation results of QC-NCSN++ with different choices of $\lambda$ on the CIFAR-10 dataset.}
    \vspace{0.2em}
    \resizebox{\linewidth}{!}{
    \begin{tabular}{c|cccc|c}
        \boldtoprule
        $\lambda$    &  0.001 & 0.0005 & 0.0001 & 0.00005 & 0.0\\
        \thline
        $\asym$      & \textbf{9.99 e6} & 2.38 e7 & 5.03 e7 & 6.42 e7 & 1.88 e8\\
        \thline
        FID          & 2.75 & 2.53 & \textbf{2.48} & \textbf{2.48} & 2.50\\
        IS           & 9.58 & 9.64 & \textbf{9.70} & 9.61 & 9.58\\
        \boldtoprule
    \end{tabular}
    }
    \vspace{-1em}
    \label{tab:lambda}
\end{wraptable} 
\subsubsection{The Impact of the Choices of $\lambda$ on the Performance of QC-NCSN++}
\label{apx:experiments:lambda}
Based on our preliminary results on the toy environment, we perform a hyperparameter sweep for $\lambda=$\{1e-3, 5e-4, 1e-4, 5e-5\}, and report the best results on the real-world experiments. Table~\ref{tab:lambda} presents the evaluation results of FID and IS under different choices of $\lambda$. In this experiment, the PC sampler is adopted and NFE is fixed at 1,000. The experimental results presented on the rows `FID' and `IS' demonstrate that QC-NCSN++ achieves its best sampling performance when $\lambda$ is selected as 0.0001. Based on this finding, we choose $\lambda$ to equal to 0.0001 throughout the experiments in Section~\ref{sec:experimental_results}.

\subsubsection{Visualized Examples}
\label{apx:experiments:uncurated_examples}
Fig.~\ref{fig:figure} depict a few uncurated visualized examples that qualitatively demonstrate the sampling quality of QC-NCSN++ on the real-world datasets.
\begin{figure}[h]
    \centering
    \vspace{1em}
    \includegraphics[width=0.65\linewidth]{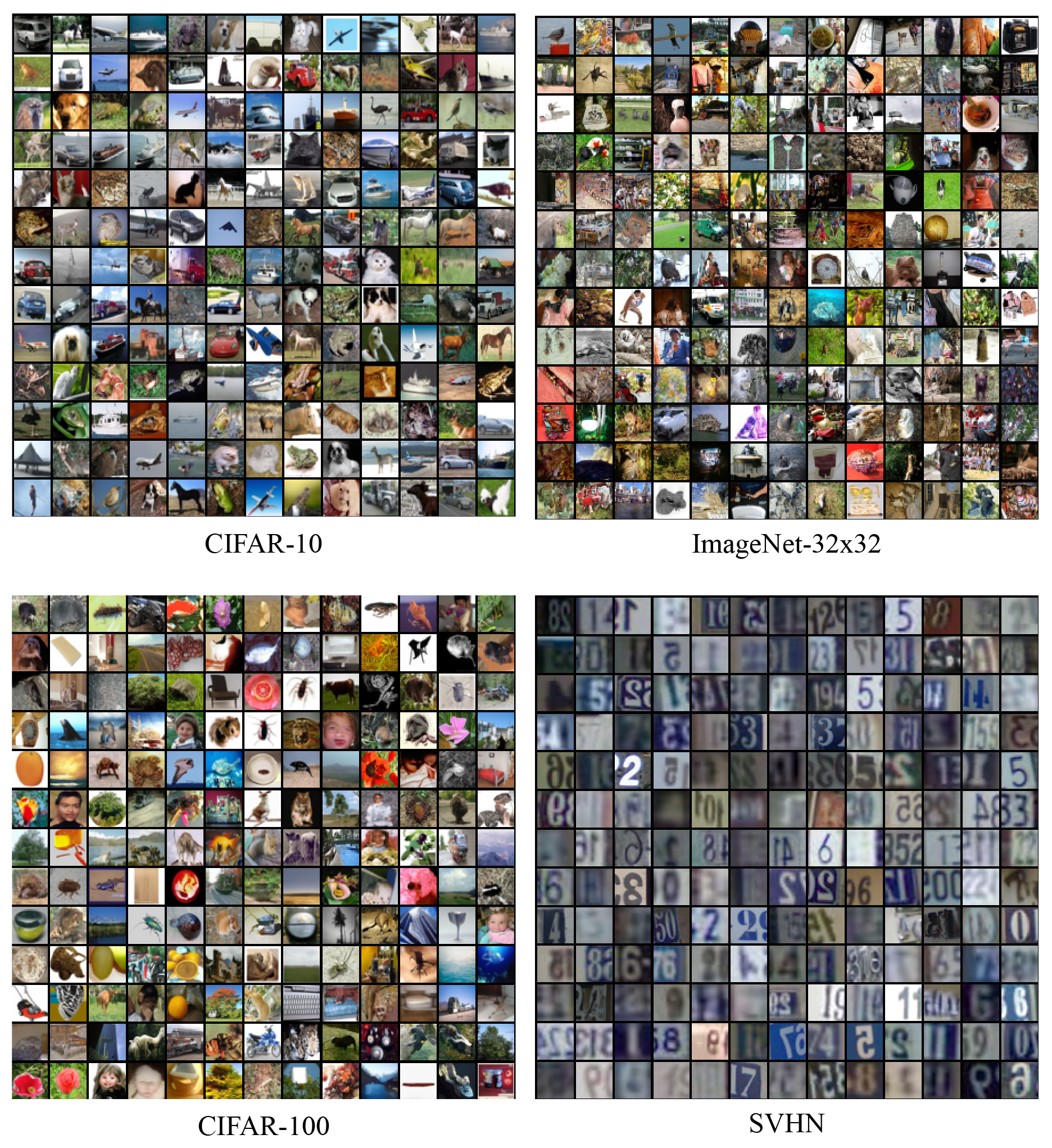}
    \vspace{-1em}
    \caption{A number of visualized examples generated using QC-NCSN++.}
    \label{fig:figure}
\end{figure}

\begin{figure}[h]
    \centering
    \vspace{-1em}
    \includegraphics[width=\linewidth]{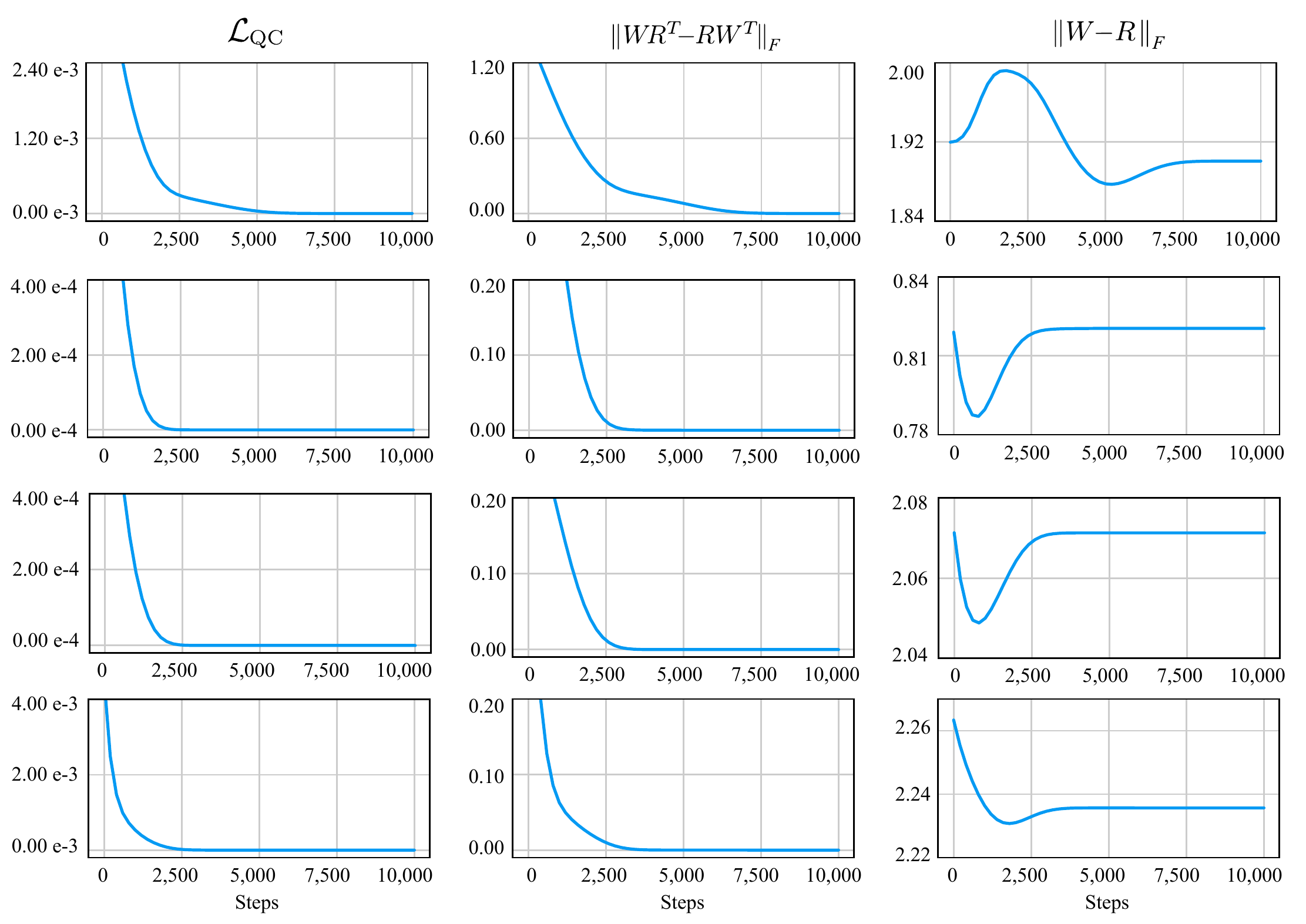}
    \vspace{-2em}
    \caption{The trends of $\forbnorm{\weightw\weightr^T-\weightr\weightw^T}$ and $\forbnorm{\weightw-\weightr}$ during the minimization process of $\reg$. The `steps' on the x-axes refer to the training steps.}
    \label{fig:sym_exp+}
\end{figure}


\end{document}